\theoremstyle{thmstyleone}%
\newtheorem{theorem}{Theorem}
\newtheorem{proposition}[theorem]{Proposition}%
\theoremstyle{thmstyletwo}%
\newtheorem{lemma}{Lemma}
\theoremstyle{thmstylethree}%
\newtheorem{definition}{Definition}%
\begin{document}

\title[SUNLayer]{SUNLayer: Stable denoising with generative networks}


\author[1]{\fnm{Ruhui} \sur{Jin}} \email{rjin18@jh.edu}

\author[2]{\fnm{Dustin~G.} \sur{Mixon}}\email{mixon.23@osu.edu}

\author*[1]{\fnm{Soledad} \sur{Villar}}\email{svillar3@jhu.edu}
\equalcont{The author list is presented in alphabetical order.}

\affil[1]{\orgdiv{Department of Applied Mathematics and Statistics}, \orgname{Johns Hopkins University}, \orgaddress{\city{Baltimore}, \state{Maryland}}}

\affil[2]{\orgdiv{Department of Mathematics}, \orgname{The Ohio State University}, \orgaddress{\city{Columbus}, \state{Ohio}}}


\abstract{
Deep neural networks are often used to implement powerful generative models for real-world data. Notable applications include image denoising, as well as other classical inverse problems like compressed sensing and super-resolution. 
To provide a rigorous but simplified analysis of generative models, in this work, we introduce an elegant theoretical framework based on spherical harmonics, namely \textbf{SUNLayer}. Our theoretical framework identifies explicit conditions on activation functions that guarantee denoising under local optimization. Numerical experiments examine the theoretical properties on commonly used activation functions and demonstrate their stable denoising performance. }

\keywords{Signal processing,  denoising, generative models, spherical harmonics, inverse problems. }

\maketitle



\section{Introduction}



Deep neural networks are widely applied to implement generative models for real-world data (e.g. diffusion-based models \cite{song2020denoising}, variational autoencoders \cite{kingma2013auto}, generative adversarial networks \cite{goodfellow2014generative}). Their success is particularly evident in applications involving natural images (see, for instance, \cite{nguyen2016synthesizing}). These generative priors have been successfully applied to classical inverse problems in signal processing, including super-resolution~\cite{johnson2016perceptual} and compressed sensing~\cite{bora2017compressed}. For example, \cite{bora2017compressed} demonstrates numerically that generative priors can solve compressed sensing problems with roughly ten times fewer measurements than required by traditional compressed sensing theory.
Follow-up work by \cite{hand2017global} analyzed optimization methods, specifically, empirical risk minimization, in compressed sensing by fitting the data with multi-layer neural networks having random weights and ReLU activation functions.


The aim of this paper is to develop a theoretical analysis for neural network method's behavior in signal denoising, a classical inverse problem task. While deep neural networks have demonstrated strong empirical performance in applications such as image denoising and inpainting \cite{xie2012image}, existing theoretical results are still scarce. In this work, we focus on the high-noise regime of denoising, where traditional, non–learning-based methods often fail to achieve satisfactory performance. 

To this end, we introduce \textbf{SUNLayer}, a generative network with a simplified yet expressive structure, where signals are generated through compositions of a linear map and nonlinear activation functions. Leveraging tools from spherical harmonic analysis \cite{morimoto1998analytic}, we derive explicit conditions on the activation functions that ensure stable reconstruction by local methods, in terms of the optimization landscape. Our analysis provides a theoretical foundation for the criteria of ``good" nonlinearities in generative network denoisers and clarifies how network basis contributes to the stable recovery in inverse problems.

Our theory suggests a systematic way to predict how well a model with a certain nonlinearity would perform at denoising. We compute these metrics for a set of well-known nonlinearities.
We compare the denoising properties of models with the different nonlinearities in toy inverse problems, obtaining results consistent with the theory.

\subsection{Related works}


Classical denoising methods formulate signal recovery as an inverse problem, reconstructing a clean signal $x$ from a noisy observation $y$. Prominent examples include compressive sensing \cite{metzler2016denoising}, phase retrieval \cite{candes2015phase}, and total variation regularization \cite{rudin1992nonlinear}. They rely on explicit, mathematically prescribed priors and regularization to exploit data structural assumptions such as sparsity, low-rankness, or piecewise smoothness. These approaches can yield strong recovery guarantees even with limited data. However, their reliance on carefully designed priors restricts their ability to capture the rich diversity of natural signals, motivating the development of data-driven alternatives.

In contrast, deep generative models learn signal priors directly from data to capture the distribution of clean signals, typically assuming that they are near a low-dimensional manifold. This paradigm has become central to modern denoising, enabling expressive modeling of complex signals such as natural images, medical and scientific measurements. Representative frameworks include GANs \cite{goodfellow2014generative}, Variational Autoencoders (VAEs) \cite{kingma2013auto}, and more recently, diffusion models \cite{song2020denoising}. Despite their strong empirical performance, deep generative denoisers pose significant theoretical challenges, including nonconvex optimization landscapes and fragile stability guarantees. 

Substantial efforts have aimed to understand the theoretical guarantees of deep generative priors. One of the first works in this direction is \cite{bora2017compressed}, which introduced generative models as structured priors for compressive sensing. Building on this idea, \cite{hand2017global} analyzed global recovery via empirical risk minimization under random measurements, with sample complexity scaling with the latent dimension. 
Subsequent works investigated algorithmic guarantees for solving the resulting nonconvex problems, with \cite{shah2018solving}, \cite{gomez2019fast}, and \cite{huang2021provably} establishing provable convergence results for first-order methods such as PGD, SGD, and ADMM under suitable assumptions on the generator.

More closely, 
\cite{tripathi2018correction} analyzes denoising via latent-space optimization on GAN manifolds, providing recovery guarantees under low-dimensional generative assumptions. In contrast, the proposed SUNLayer framework studies denoising from a local geometric perspective. By using spherical harmonics analytical tool, SUNLayer characterizes the local stability, offering theoretical knowledge into what kind of generative networks (activation functions) act as stable denoisers. Another related work \cite{bach2017breaking} studies a basic neural architecture and its statistical learning power, emphasizing how certain activation functions enable a convex optimization perspective. While not directly concerned with generative modeling, \cite{bach2017breaking} uses a similar simplification of neural networks to analyze deep learning.

More broadly, state-of-the-art denoising methods increasingly rely on diffusion- and score-based generative models, which learn the score of the data distribution via denoising objectives and perform noise removal through stochastic sampling dynamics. We view this line of work as adjacent and complementary to the present study.

\subsection{Main contributions}



This work highlights the following contributions. We introduce SUNLayer, a prototype model with spherical uniform neural network layers for the theoretical analysis of generative neural networks (Section~\ref{sec:sunlayer}). The model is designed to capture the essential structure of generative networks while enabling the use of spherical harmonics techniques and remaining analytically tractable. Building on this framework, we establish performance guarantees for denoising. In particular, we show that the associated nonconvex reconstruction objective achieves near-optimal critical points, depending on noise level and network components.  We provide explicit conditions on the activation function for provable reconstruction (Section~\ref{sec:theory}). Finally, the theoretical framework developed in this paper can extend beyond denoising and offers a basic approach for gaining insights into the behavior of deep generative models applied to general inverse problems; see Section~\ref{sec:discussion} for further discussions.

\section{SUNLayer: a neural network model} \label{sec:sunlayer}  \label{sec:finite_dim}



Let $x\in S^{n}$ be an input signal, we consider the linear map $x\mapsto f_x \in \mathscr{L}^2(S^{n})$ where $f_x(y)=x\cdot y$ the inner product in $\mathbb{R}^{n+1}$ between $x$ and $y$. 
Let $\theta:\mathbb R\to \mathbb R$ be an activation function. We define one layer of the SUNLayer neural network to be 
\begin{eqnarray}
L_{n}: S^{n} &\to& \mathscr{L}^2(S^{n})  \label{eq.layer}\\ 
L_{n}(x)&=&\theta\circ f_x \nonumber.
\end{eqnarray}

Note that if instead of the linear map $f_x$ we had considered, as one usually does in neural networks, a matrix $M\in \mathbb R^{t\times n+1}$, then the analogous to $L_{n}(x)$ is essentially $\theta(Mx)$ that can be seen as a function defined in the rows of $M$ as $L(x): \{1,\ldots t\}\to \mathbb R$ as $m_i\mapsto \theta(x \cdot m_i)$. The SUNLayer model is heuristically generalizing the linear step to a continuum of possible rows. 


We are interested in the case where $L_{n}(S^{n})\subset A \subset \mathscr{L}^2(S^{n})$ where $A$ is a finite dimensional subspace of $\mathscr{L}^2(S^{n})$ (and therefore locally compact). The finite dimensionality will allow us to compose several layers of the SUNLayer model. For all $x\in S^{n}$, we have that $\frac{L_{n}(x)}{\|L_{n}(x)\|} \in S^{n'}$ with $\operatorname{dim}(A)=n'+1$. A very simple observation (see proof of Lemma \ref{lem.max}) shows that $\|L_{n}(x)\|= \|L_{n}(x')\| = c_{n,\theta}$ for all $x,x'\in S^{n}$ where $c_{n,\theta}$ is a constant that depends on the activation function $\theta$ and on the dimension $n$ of the domain. Therefore the normalization step (which a priori may have resembled practice standards like batch normalization~(\cite{ioffe2015batch})) amounts to simple
rescaling, and furthermore, we even have $c_{n,\theta}=1$ when $\theta$ is
scaled appropriately (see Lemma~\ref{lemma.theta}). 

We then conclude that  $L_{n'}\circ L_{n}:S^{n} \to \mathscr{L}^2(S^{n'})$ is well defined as long that $A$ is finite dimensional. In Section~\ref{sec:theory} we observe that a necessary and sufficient condition for $A$ to be finite dimensional is that $\theta$ is a polynomial.

\subsection{Denoising} \label{sec:denoising}
Let us assume we have a generative model $G: S^{n} \to \mathbb R^N$ that given a parameter $x\in  S^{n}$ produces $G(x)$, an element of a target space (for instance an image)\footnote{The generative model could have been produced for instance with a generative adversarial network (GAN) trained with a large set of images or more generally structured dataset (that comes from an unknown latent distribution). The GAN consists of two neural networks, one known as the generator, which aims to construct new data plausible to be coming from the latent distribution of the training set, and the other is the discriminator which aims to distinguish between instances from the true dataset and the candidates produced by the generator. Both networks get trained against each other. \newline
After training the generator produces a neural network with several layers. We assume the parameter is space is normalized, so the generator finds a generative model $G:  S^{n} \to \mathbb R^N$ where $n \ll N$. For all $x$ we have that $G(x)$ is an element in the target space (for instance, an image) and $x$ is the vector of parameters that generates it.}.
The question we aim to answer is when is it possible to \emph{denoise} an element $y \in \mathbb R^N$ to the closest element in the image of $G$ 
by using local methods like gradient descent. Figure~\ref{fig.denoising} shows an example of the phenomenon we aim to explain.  

We assume our generative model is the composition of layers from the SUNLayer model defined in \eqref{eq.layer}. We solve the denoising problem one layer at a time. 
Fix $x^\sharp \in S^{n}$. Given $y=\theta\circ f_{x^\sharp} + \eta$  for some $\theta:\mathbb R \to \mathbb R$ and noise $\eta\in \mathscr{L}^2(S^{n})$, then denoising for one SUNLayer corresponds with the least squares problem
\begin{equation}
\label{lsp}
\min_{x\in S^{n}} \|\theta \circ f_{x} - y\|^2_{\mathscr{L}^2(S^{n})}.
\end{equation}
There exists at least one minimizer for \eqref{lsp} due to compactness. 

\begin{figure}
\begin{center}
\begin{minipage}{\textwidth}
\includegraphics[width=0.135\textwidth, height=0.135\textwidth]{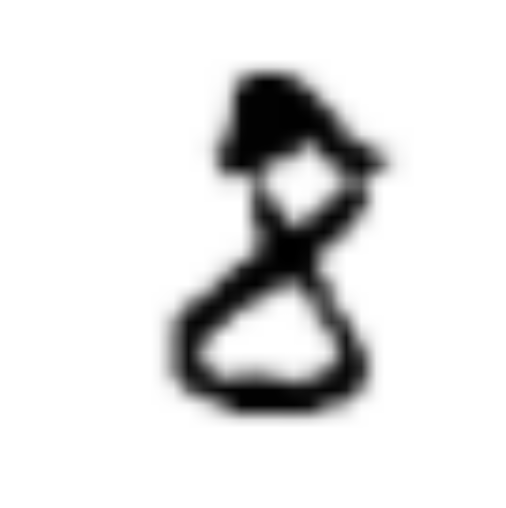}
\includegraphics[width=0.135\textwidth, height=0.135\textwidth]{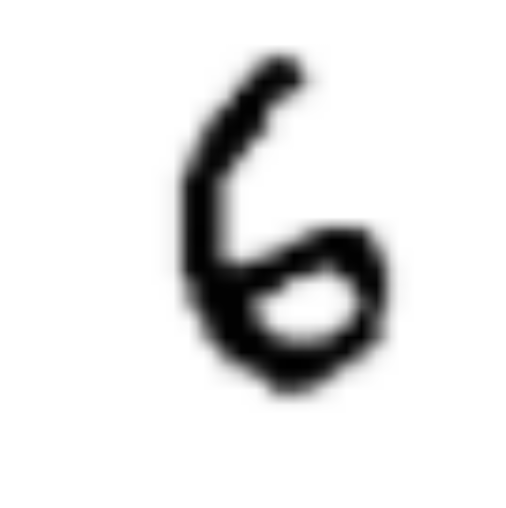}
\includegraphics[width=0.135\textwidth, height=0.135\textwidth]{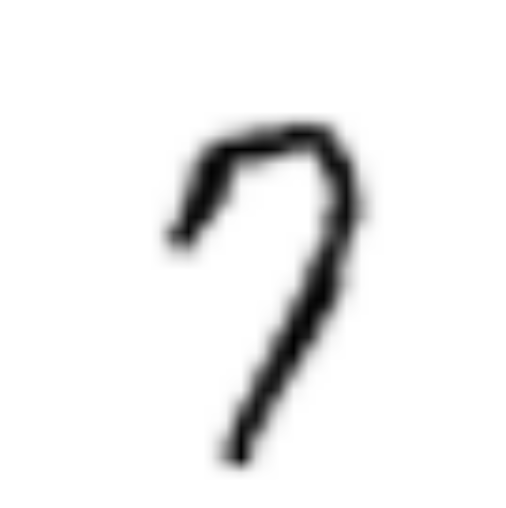}
\includegraphics[width=0.135\textwidth, height=0.135\textwidth]{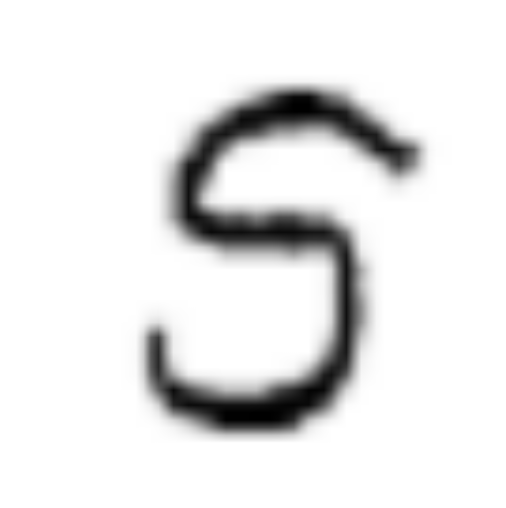}
\includegraphics[width=0.135\textwidth, height=0.135\textwidth]{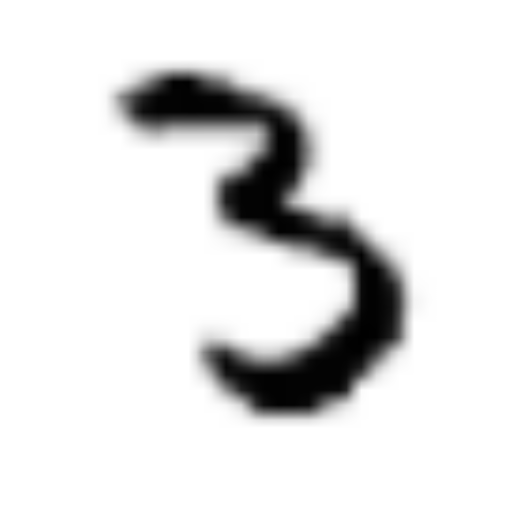}
\includegraphics[width=0.135\textwidth, height=0.135\textwidth]{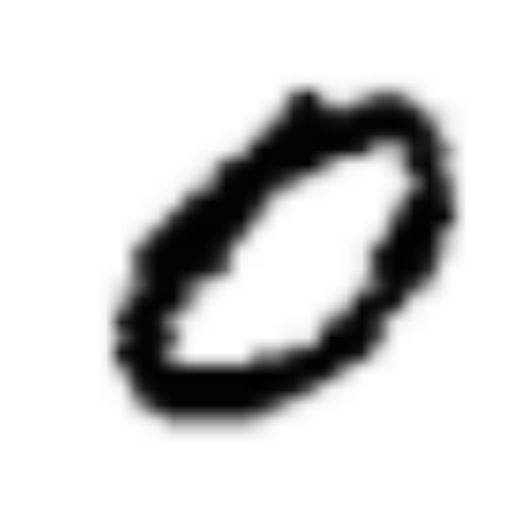}
\includegraphics[width=0.135\textwidth, height=0.135\textwidth]{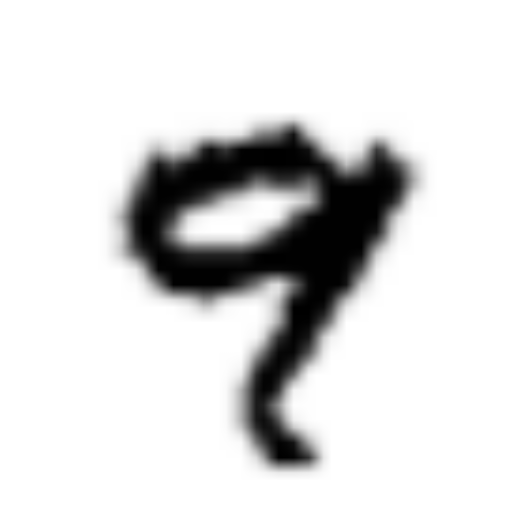}
\newline
\includegraphics[width=0.135\textwidth, height=0.135\textwidth]{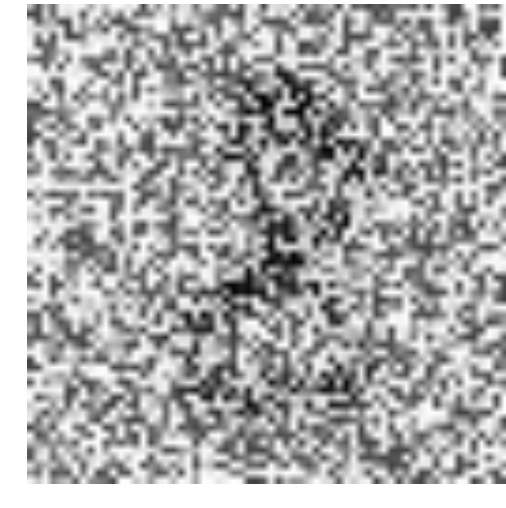}
\includegraphics[width=0.135\textwidth, height=0.135\textwidth]{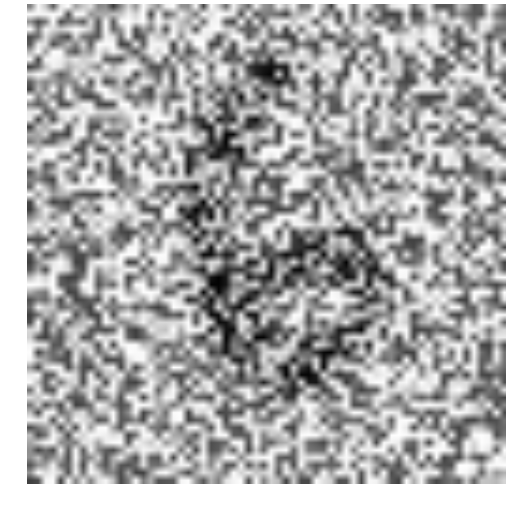}
\includegraphics[width=0.135\textwidth, height=0.135\textwidth]{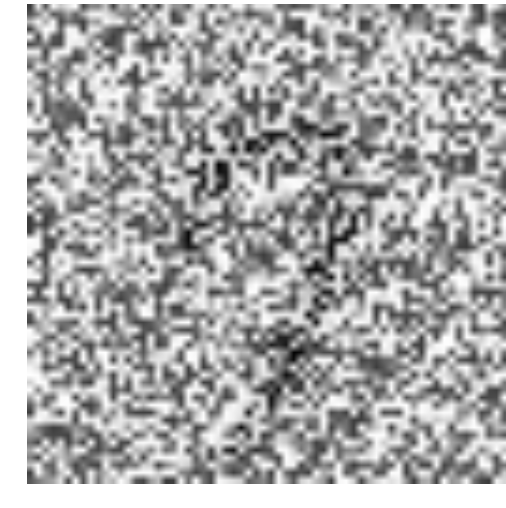}
\includegraphics[width=0.135\textwidth, height=0.135\textwidth]{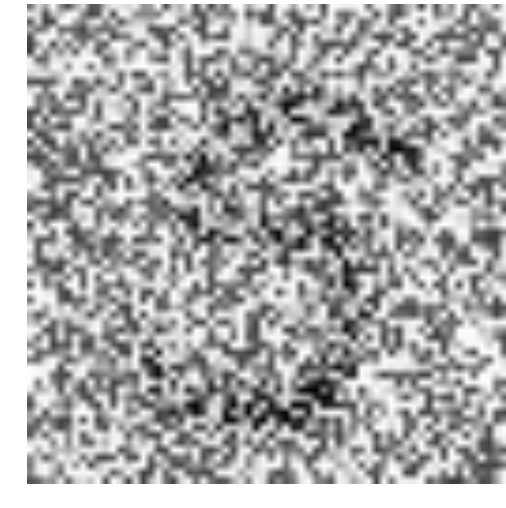}
\includegraphics[width=0.135\textwidth, height=0.135\textwidth]{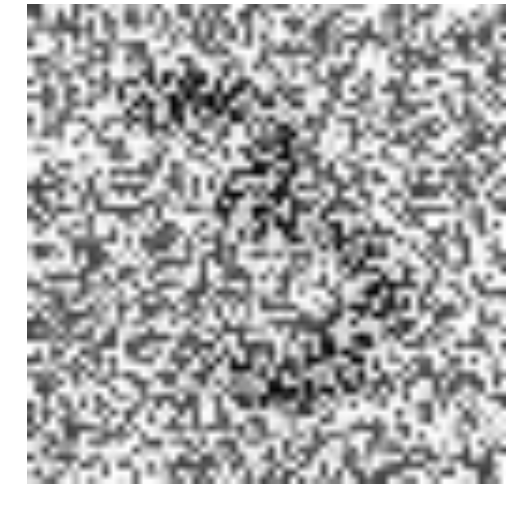}
\includegraphics[width=0.135\textwidth, height=0.135\textwidth]{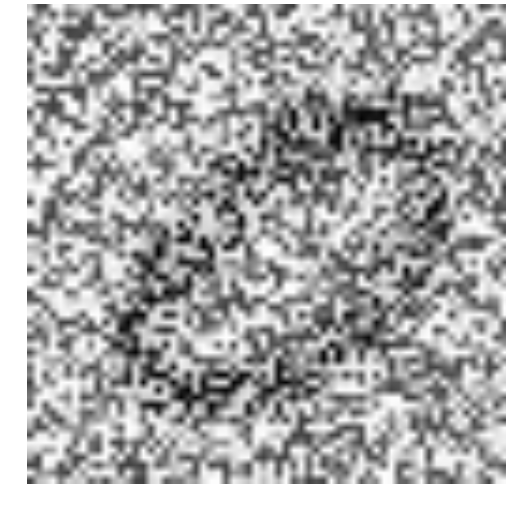}
\includegraphics[width=0.135\textwidth, height=0.135\textwidth]{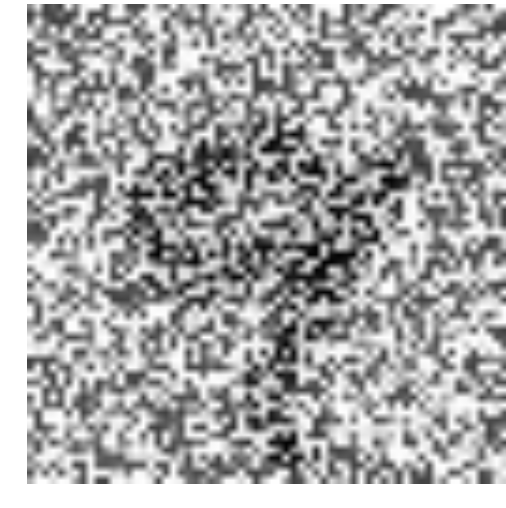}
\newline
\includegraphics[width=0.135\textwidth, height=0.135\textwidth]{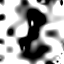}
\includegraphics[width=0.135\textwidth, height=0.135\textwidth]{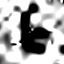}
\includegraphics[width=0.135\textwidth, height=0.135\textwidth]{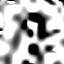}
\includegraphics[width=0.135\textwidth, height=0.135\textwidth]{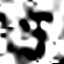}
\includegraphics[width=0.135\textwidth, height=0.135\textwidth]{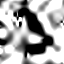}
\includegraphics[width=0.135\textwidth, height=0.135\textwidth]{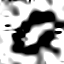}
\includegraphics[width=0.135\textwidth, height=0.135\textwidth]{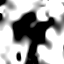}
\newline
\includegraphics[width=0.135\textwidth, height=0.135\textwidth]{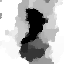}
\includegraphics[width=0.135\textwidth, height=0.135\textwidth]{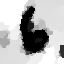}
\includegraphics[width=0.135\textwidth, height=0.135\textwidth]{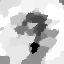}
\includegraphics[width=0.135\textwidth, height=0.135\textwidth]{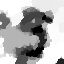}
\includegraphics[width=0.135\textwidth, height=0.135\textwidth]{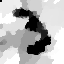}
\includegraphics[width=0.135\textwidth, height=0.135\textwidth]{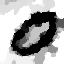}
\includegraphics[width=0.135\textwidth, height=0.135\textwidth]{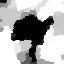}
\newline
\includegraphics[width=0.135\textwidth, height=0.135\textwidth]{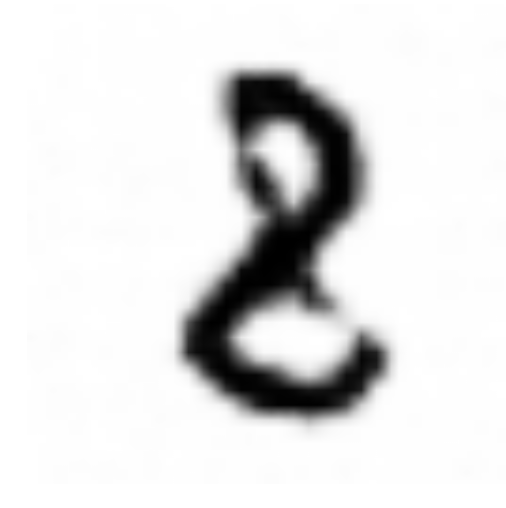}
\includegraphics[width=0.135\textwidth, height=0.135\textwidth]{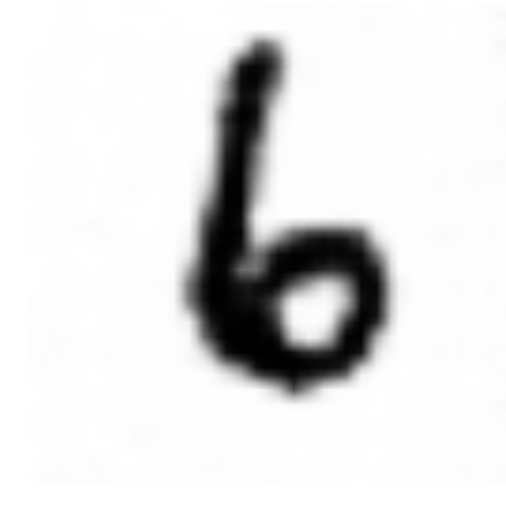}
\includegraphics[width=0.135\textwidth, height=0.135\textwidth]{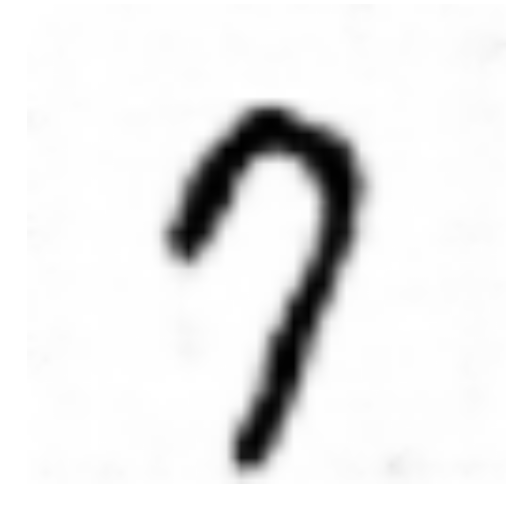}
\includegraphics[width=0.135\textwidth, height=0.135\textwidth]{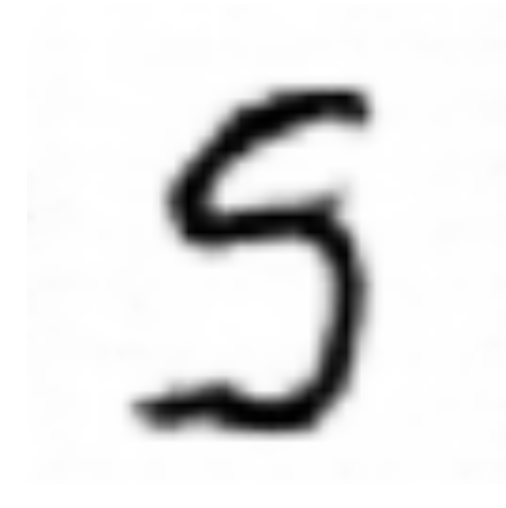}
\includegraphics[width=0.135\textwidth, height=0.135\textwidth]{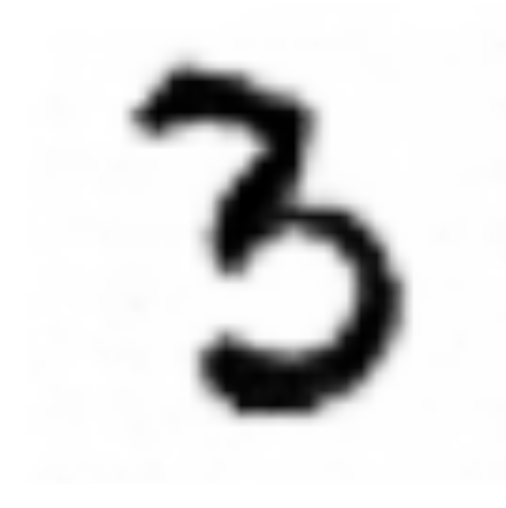}
\includegraphics[width=0.135\textwidth, height=0.135\textwidth]{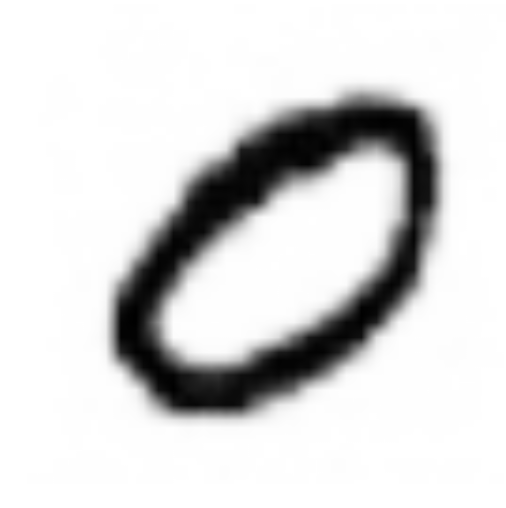}
\includegraphics[width=0.135\textwidth, height=0.135\textwidth]{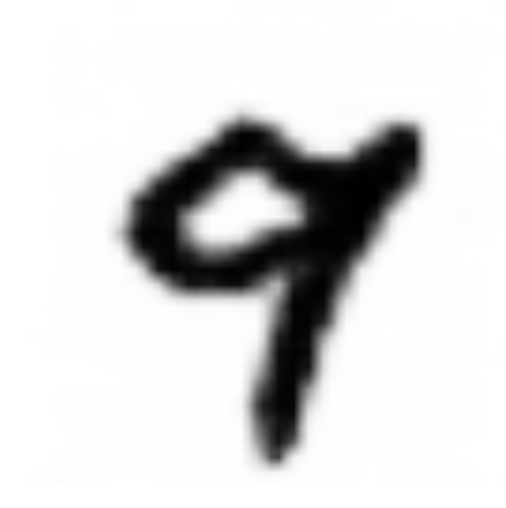}
\end{minipage}
\end{center}
\caption{Denoising with generative priors}{\label{fig.denoising} \small \textbf{ (First line)} Digits from the MNIST test set~(\cite{lecun1998mnist}). \textbf{(Second line)} random noise is added to the digits. \textbf{(Third line)} Denoising of images by shrinkage in wavelet domain~(\cite{donoho1994ideal}). \textbf{(Fourth line)} Denoising by minimizing total variation~(\cite{rudin1992nonlinear}). \textbf{(Fifth line)} We train a GAN using the training set of MNIST to obtain a generative model $G$. We denoise by finding the closest element in the image of $G$ using stochastic grading descent.}
\end{figure}

\section{Preliminaries: spherical harmonics}
\label{app.harmonics}
To analyze denoising under the SUNLayer model, we leverage ideas from spherical harmonics.
In this section we summarize some classical results about spherical harmonics that can be found on Chapter 2 of~\cite{morimoto1998analytic}, focusing on theorems and definitions we use in this paper. We refer the reader to \cite{morimoto1998analytic} for a comprehensive review.

Let $\mathcal P_k(\mathbb R^{n+1})$ the space of homogeneous polynomials of degree $k$ in $n+1$ variables (we could have also considered real or complex coefficients but real is enough for the scope of this paper). 
\begin{definition}[Spherical harmonics] The Laplacian is the differential operator defined as
$$\Delta_x=\frac{\partial^2}{\partial x_1^2} + \ldots + \frac{\partial^2}{\partial x_{n+1}^2},$$ 
and the space of spherical harmonics is defined as:
\begin{equation}
\mathcal H_k(S^{n}) = \{H_k\in \mathcal P_k(S^{n}): \Delta H_k=0 \} \subset  \mathscr{L}^2(S^{n}).
\end{equation}
In other words, $\mathcal H_k(S^{n})$ is the restriction of the polynomials with Laplacian 0 to $S^{n}$.
\end{definition}


\begin{proposition}
$\mathcal H_k(S^{n})$ is a finite dimensional space and 
\begin{equation} \label{l2.eq} \mathscr{L}^2(S^{n})= \oplus_{k=0}^\infty \mathcal H_k(S^{n}).
\end{equation}
\end{proposition}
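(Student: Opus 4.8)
The plan is to establish the two assertions separately: finite dimensionality is essentially immediate, while the orthogonal decomposition \eqref{l2.eq} is the substantial claim and naturally splits into an orthogonality statement and a completeness statement. For finite dimensionality, I would note that $\mathcal P_k(\mathbb R^{n+1})$ is itself finite dimensional, of dimension $\binom{n+k}{k}$, a basis being the monomials $x_1^{a_1}\cdots x_{n+1}^{a_{n+1}}$ with $a_1+\cdots+a_{n+1}=k$. The harmonic homogeneous polynomials form the kernel of the linear map $\Delta\colon \mathcal P_k(\mathbb R^{n+1})\to\mathcal P_{k-2}(\mathbb R^{n+1})$, hence a subspace of a finite dimensional space. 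Finally, the restriction map to $S^{n}$ is injective on this subspace: a homogeneous polynomial vanishing on $S^{n}$ vanishes on every ray by homogeneity, hence is identically zero. Thus $\mathcal H_k(S^{n})$ is finite dimensional.

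For the decomposition, the first step is the algebraic identity $\mathcal P_k = \mathcal H_k \oplus |x|^2\,\mathcal P_{k-2}$ on homogeneous polynomials. I would prove this by equipping $\mathcal P_k(\mathbb R^{n+1})$ with the Fischer (apolar) inner product $\langle P,Q\rangle := \bigl(P(\partial)\,Q\bigr)$, under which multiplication by $|x|^2=\sum_i x_i^2$ and the Laplacian $\Delta=\sum_i\partial_i^2$ are mutually adjoint; the orthogonal complement of $|x|^2\,\mathcal P_{k-2}$ in $\mathcal P_k$ is then exactly $\ker\Delta=\mathcal H_k$. Iterating yields $\mathcal P_k=\bigoplus_{j\ge0}|x|^{2j}\,\mathcal H_{k-2j}$, and since $|x|^2=1$ on $S^{n}$, the restriction to $S^{n}$ of every polynomial is a finite sum of spherical harmonics.

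Next I would establish orthogonality of $\mathcal H_j(S^{n})$ and $\mathcal H_k(S^{n})$ for $j\ne k$. Writing $x=r\omega$ with $\omega\in S^{n}$, the Euclidean Laplacian becomes $\Delta = \partial_r^2 + \tfrac{n}{r}\partial_r + \tfrac{1}{r^2}\Delta_{S^{n}}$, where $\Delta_{S^{n}}$ is the Laplace--Beltrami operator. Applying this to $H_k(x)=r^k Y_k(\omega)$ and using $\Delta H_k=0$ shows $\Delta_{S^{n}}Y_k=-k(k+n-1)Y_k$, so spherical harmonics of degree $k$ are eigenfunctions of the self-adjoint operator $\Delta_{S^{n}}$ with eigenvalue $-k(k+n-1)$. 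Distinct degrees give distinct eigenvalues, so the eigenspaces are mutually orthogonal in $\mathscr{L}^2(S^{n})$.

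The main obstacle is completeness: showing that the orthogonal sum $\bigoplus_k \mathcal H_k(S^{n})$ exhausts $\mathscr{L}^2(S^{n})$. Here I would invoke the Stone--Weierstrass theorem: the algebra of polynomial functions restricted to $S^{n}$ separates points and contains the constants, hence is dense in $C(S^{n})$ in the uniform norm, and therefore dense in $\mathscr{L}^2(S^{n})$. By the algebraic decomposition above, every such restriction lies in $\bigoplus_k \mathcal H_k(S^{n})$, so this orthogonal sum is dense; combined with the orthogonality just proved, its closure is all of $\mathscr{L}^2(S^{n})$, which is exactly \eqref{l2.eq}. The delicate points are verifying nondegeneracy (positive definiteness) of the Fischer form in the algebraic step and passing from uniform density to $\mathscr{L}^2$ density, both of which are standard but deserve care.
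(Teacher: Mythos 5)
Your argument is correct and complete. Be aware, though, that the paper itself offers no proof of this proposition: it is stated as a classical fact imported from Chapter~2 of Morimoto's book, so there is no ``paper's approach'' to compare against. What you have written is the standard textbook derivation, and all the pieces check out: the finite dimensionality via $\ker\bigl(\Delta\colon\mathcal P_k\to\mathcal P_{k-2}\bigr)$ together with injectivity of restriction to $S^{n}$ (by homogeneity); the Fischer-inner-product adjointness of $\Delta$ and multiplication by $|x|^2$ giving $\mathcal P_k=\mathcal H_k\oplus|x|^2\mathcal P_{k-2}$ and hence, iterating, that every polynomial restricts on $S^{n}$ to a finite sum of spherical harmonics; the eigenvalue computation $\Delta_{S^{n}}Y_k=-k(k+n-1)Y_k$ from the radial decomposition of $\Delta$ in $\mathbb R^{n+1}$ (the coefficient $n/r$ is right since $S^{n}$ sits in $\mathbb R^{n+1}$), which yields orthogonality across degrees because the eigenvalues $k(k+n-1)$ are distinct; and Stone--Weierstrass for completeness. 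The two ``delicate points'' you flag are genuinely the only places requiring a line of verification: positive definiteness of the Fischer form follows from $\langle P,P\rangle=\sum_\alpha \alpha!\,a_\alpha^2$, and uniform density implies $\mathscr L^2$ density because $S^{n}$ has finite measure and $C(S^{n})$ is dense in $\mathscr L^2(S^{n})$. Nothing is missing.
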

In the sequel, we let $\alpha_{n,k}$ denote the dimension of $\mathcal{H}_k(S^{n})$.

\begin{definition}
For fixed $k$ and $n$ let $\{Y_k^{1}, \ldots Y_{k}^{\alpha_{n,k}}\}$ an orthonormal basis of $\mathcal H_k(S^{n})$. Define the bilinear form
$$F_{k}(\sigma, \tau) = \sum_{i=1}^{\alpha_{n,k}} Y_k^i(\sigma)\, \overline{Y_k^i(\tau)}.$$
\end{definition}
A simple computation shows that $F_k$ is independent of the choice of the orthonormal basis. 
The bilinear forms $F_k(\cdot, \cdot)$ will be very useful in the analysis of the SUNLayer model. Some of their relevant properties are summarized in the following lemma.


\begin{proposition} The following statements hold.
\begin{enumerate}
\item Reproducing property: $\langle H, F_k(\sigma, \cdot)\rangle  = H(\sigma)$ for all $H\in \mathcal H_k(S^{n})$.
\item Zonal property: there exists $\varphi_{n,k}:\mathbb R\to \mathbb R$ so that $\langle F_k(\sigma_1, \cdot), F_{k}(\sigma_2, \cdot) \rangle = F_k (\sigma_1, \sigma_2) = \varphi_{n,k}(\sigma_1 \cdot \sigma_2)$. In particular $F_k (\sigma_1, \sigma_2)$ only depends on $\sigma_1 \cdot \sigma_2$.
\item The function $\varphi_{n,k}:\mathbb R\to \mathbb R$ is the Gegenbauer polynomial of degree $k$ and dimension $n+1$. The set $\{\varphi_{n,k}\}_{k=0}^\infty$ is an orthogonal basis of polynomials over $[-1,1]$ with respect to the measure 
\begin{equation}d\mu_n = (1-t^2)^{(n-2)/2}dt
\label{measure}
\end{equation} (here $dt$ is the standard Borel measure in $\mathbb R$). 
 Note that this is not a standard normalization for the Gegenbauer polynomials but we use it to simplify the results of this paper. In fact Chapter 2 of~\cite{morimoto1998analytic} considers the Legendre polynomials to be $P_{n,k}(t)=\frac{\operatorname{vol}(S^{n})}{\alpha_{n,k}}\varphi_{n,k}(t)$ (the term $\operatorname{vol}(S^{n})$ is the $n$-dimensional volume of the sphere and it does not show up in Morimoto's analysis since he uses the normalized measure in the spheres). In Chapter 5 Morimoto considers the Gegenbauer polynomials as a generalization of the Legendre polynomials where $n>0$ can be any real number, with a different normalization. 
 
\item The discussion in pages 26--27 of~\cite{morimoto1998analytic} shows that $P_{n,k}(1)=1$. This together with the facts $\alpha_{n,0}=1, \alpha_{n,1}=n+1,$ $$ \alpha_{n,k}=\binom{n+k}{k}-\binom{n+k-2}{k-2} = \frac{(2k+n-1)(k+n-2)!}{k!(n-1)!} = O(k^{n-1}) \quad \text{ for $k\geq 2$, }$$
and $\displaystyle \operatorname{vol}(S^{n})= \frac{2\pi^{(n+1)/2}}{\Gamma((n+1)/2)}$ allow us to identify the correct normalization for the Gegenbauer polynomials. 
\item Using that $\varphi_{n,k}(t)= \frac{\alpha_{n,k}}{\operatorname{vol}(S^n)} P_{n,k}(t)$ and Theorems 2.29 and 2.34 of \cite{morimoto1998analytic} one obtains the following identities:
\begin{align}
&\|\varphi_{n,k}(t)\|_\infty = \varphi_{n,k}(1) = \frac{\alpha_{n,k}}{\operatorname{vol}(S^n)} \label{bound.inf}, \\
&\|\varphi_{n,k}(t)\|_{\mathscr{L}^2(\mu_n)}^2 = \int_{-1}^1 \varphi_{n,k}(t)^2 (1-t^2)^{(n-2)/2}dt = \frac{\alpha_{n,k}}{\operatorname{vol}(S^n)\operatorname{vol}(S^{n-1})}. \label{bound.norm}
\end{align}
\item Using (5.1) and (5.3) of~\cite{morimoto1998analytic} (pages 97--98) one can express a relationship between $\varphi_{n,k}(t)$ and its derivative $\varphi_{n,k}'(t):=\frac{d}{dt}\varphi_{n,k}(t)$, namely
\begin{equation}
\varphi_{n,k}'(t)= \frac{(n+1)\operatorname{vol}(S^n)}{\operatorname{vol}(S^{n+2})} \varphi_{n+2, k-1}(t). \label{bound.derivative}
\end{equation}
\item Let $h\in \mathscr{L}^2(S^{n})$ a $C^{2r}$ function, then one can decompose $h$ in the spherical harmonics as $h(\tau)=\sum_{k=0}^\infty h_k(\tau)$ where $h_k\in\mathcal H_k(S^n)$. Theorem 2.45 of~\cite{morimoto1998analytic} in particular shows that for all $k\geq 0$ one has
\begin{equation}
\label{c2.bound}
k^{2r}\|h_k(\tau)\|_{\mathscr{L}^2(S^{n})} \leq \|(\Delta_{S^n})^{r} h\|_{\mathscr{L}^2(S^{n})}
\end{equation}
where $\Delta_{S^n}$ is the spherical Laplacian. In particular, if there exists an axis under which $h$ is rotationally invariant (i.e. $h(\tau) = \theta(\omega\cdot \tau)$ for some fixed $\omega$ and some $\theta:[-1,1]\to \mathbb R$) then if $\omega\cdot \tau =t$
\begin{equation}\Delta_{S^n}(h) = \theta''(t) (1- t^2) - nt \theta'(t) 
\label{eq.laplacian}
\end{equation}
(see for instance (2.9)). 

\end{enumerate}
\end{proposition}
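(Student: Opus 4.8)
The plan is to treat the seven items as a mixture of two genuinely self-contained facts (items~1 and~2), which I would prove directly from the definition of $F_k$, and a sequence of normalization-tracking consequences of the classical theorems cited from \cite{morimoto1998analytic} (items~3--7). The single structural fact underlying everything is that $\mathcal H_k(S^n)$ is invariant under the rotation group and that its elements are eigenfunctions of the spherical Laplacian with eigenvalue $-k(k+n-1)$; I would record this at the outset, since it follows immediately from the polar decomposition $\Delta = \partial_r^2 + \tfrac{n}{r}\partial_r + \tfrac{1}{r^2}\Delta_{S^n}$ applied to the harmonic extension $r^k Y_k$, and it drives both the Sobolev bound \eqref{c2.bound} and the identification of $\varphi_{n,k}$ with a Gegenbauer polynomial. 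For item~1, I would fix the orthonormal basis $\{Y_k^i\}$, write $H=\sum_i c_i Y_k^i$ with $c_i=\langle H,Y_k^i\rangle$, and compute $\langle H, F_k(\sigma,\cdot)\rangle = \sum_i Y_k^i(\sigma)\,c_i = H(\sigma)$. For item~2, since $F_k$ is basis-independent and a rotation sends one orthonormal basis of $\mathcal H_k$ to another, $F_k$ is invariant under simultaneous rotation of its arguments and hence a function of $\sigma_1\cdot\sigma_2$, which defines $\varphi_{n,k}$; the inner-product identity then follows by applying the reproducing property to $F_k(\sigma_2,\cdot)\in\mathcal H_k$.

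For items~3--6 I would cite \cite{morimoto1998analytic} for the Gegenbauer identification and orthogonality under $d\mu_n$, but derive the norm identities \emph{intrinsically} rather than chasing constants through recurrences. Integrating $F_k(\sigma,\sigma)=\varphi_{n,k}(1)$ over the sphere gives $\operatorname{vol}(S^n)\,\varphi_{n,k}(1)=\sum_i\|Y_k^i\|^2=\alpha_{n,k}$, which yields \eqref{bound.inf} once combined with the bound $|\varphi_{n,k}(t)|\le\varphi_{n,k}(1)$ (itself a Cauchy--Schwarz consequence of item~2). For \eqref{bound.norm}, the reproducing property gives $\|F_k(\sigma,\cdot)\|^2=F_k(\sigma,\sigma)=\varphi_{n,k}(1)$, while the disintegration formula $\int_{S^n} g(\sigma\cdot\tau)\,d\tau=\operatorname{vol}(S^{n-1})\int_{-1}^1 g(t)\,d\mu_n$ rewrites the same norm as $\operatorname{vol}(S^{n-1})\|\varphi_{n,k}\|_{\mathscr{L}^2(\mu_n)}^2$; equating the two expressions gives the claimed value. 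The derivative relation \eqref{bound.derivative} is the one step that genuinely uses a three-term recurrence, and I would read it off from Morimoto's (5.1) and (5.3), converting to our convention through $\varphi_{n,k}=\tfrac{\alpha_{n,k}}{\operatorname{vol}(S^n)}P_{n,k}$.

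For item~7, the Sobolev bound is an orthogonality argument: by the eigenvalue property and Parseval, $\|(\Delta_{S^n})^r h\|^2=\sum_k \bigl(k(k+n-1)\bigr)^{2r}\|h_k\|^2$, so each summand is at least $k^{4r}\|h_k\|^2$ using $k+n-1\ge k$, and dropping all but one term gives \eqref{c2.bound}. For the zonal Laplacian formula \eqref{eq.laplacian}, I would place $\omega$ at the north pole and set $t=\cos\phi$ with $\phi$ the polar angle; since $h(\tau)=\theta(\omega\cdot\tau)$ depends only on $\phi$, the full spherical Laplacian collapses to the one-dimensional ultraspherical operator $\theta''(t)(1-t^2)-nt\,\theta'(t)$, matching the coordinate expression (2.9) of \cite{morimoto1998analytic}.

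The main obstacle here is not conceptual but bookkeeping: several incompatible normalizations are simultaneously in play, namely Morimoto's normalized surface measure versus our unnormalized $\operatorname{vol}(S^n)$ and measure $d\mu_n$, together with his Legendre normalization $P_{n,k}(1)=1$ and the distinct Gegenbauer normalization of his Chapter~5. The safeguard I would rely on is to anchor \eqref{bound.inf} and \eqref{bound.norm} on the coordinate-free identity $F_k(\sigma,\sigma)=\varphi_{n,k}(1)=\alpha_{n,k}/\operatorname{vol}(S^n)$ and on the disintegration formula, so that no constant extracted from Morimoto's recurrences needs to be trusted blindly; only \eqref{bound.derivative} truly depends on those recurrences, and that is the single identity I would verify most carefully.
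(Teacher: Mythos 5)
Your proposal is correct, but it is worth noting that the paper itself offers no proof of this proposition at all: it is stated as a catalogue of classical facts, with each item resolved by pointing to a specific theorem or page of Morimoto's book (Theorems 2.29, 2.34, 2.45, equations (5.1), (5.3), and the discussion on pages 26--27). You instead supply genuine derivations for most of the content, and your arguments check out: the reproducing property follows from expanding $H$ in the orthonormal basis exactly as you say; the zonal property follows from basis-independence of $F_k$ plus rotation-invariance of $\mathcal H_k(S^n)$ and transitivity of $O(n+1)$ on pairs with fixed inner product; integrating $F_k(\sigma,\sigma)=\varphi_{n,k}(1)$ over the sphere gives $\varphi_{n,k}(1)=\alpha_{n,k}/\operatorname{vol}(S^n)$, Cauchy--Schwarz on item~2 gives $|\varphi_{n,k}(t)|\le\varphi_{n,k}(1)$, and equating $\|F_k(\sigma,\cdot)\|^2=F_k(\sigma,\sigma)$ with the disintegration $\operatorname{vol}(S^{n-1})\|\varphi_{n,k}\|^2_{\mathscr{L}^2(\mu_n)}$ yields \eqref{bound.norm}; the Sobolev bound \eqref{c2.bound} follows from the eigenvalue $-k(k+n-1)$ of $\Delta_{S^n}$ on $\mathcal H_k$ together with Parseval and $(k(k+n-1))^{2r}\ge k^{4r}$. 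What your approach buys is robustness to the nonstandard normalization the paper adopts: by anchoring \eqref{bound.inf} and \eqref{bound.norm} on the coordinate-free identities $F_k(\sigma,\sigma)=\varphi_{n,k}(1)$ and $\|F_k(\sigma,\cdot)\|^2=F_k(\sigma,\sigma)$ rather than on constant-chasing through Morimoto's Legendre convention $P_{n,k}(1)=1$ and normalized surface measure, you make these identities self-verifying; the only item you leave resting entirely on the external reference is the derivative recurrence \eqref{bound.derivative}, which is also the paper's position. The remaining small reliance is the disintegration formula $\int_{S^n}g(\sigma\cdot\tau)\,d\tau=\operatorname{vol}(S^{n-1})\int_{-1}^1 g(t)\,d\mu_n$, which you assert without proof, but the paper uses the same identity without comment in its proof of Lemma~\ref{lemma.theta}, so this is consistent with the paper's own level of detail.
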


Note that $F_k(\sigma, \cdot) \in \mathcal H_k(S^{n})$ for all $\sigma \in S^{n}$, thus $\operatorname{span}(\{F_k(\sigma,\cdot)\}_{\sigma\in S^{n}})\subseteq \mathcal H_k(S^{n})$. The reproducing property says that for all $H\in \mathcal H_k(S^{n})$ 
\begin{equation} \label{eq.reproducing}
\langle H, F_k(\sigma, \cdot)\rangle = H(\sigma).
\end{equation}
Observe that for all $H\neq 0$ there exists $\sigma \in S^{n-1}$ such that $H(\sigma)\neq 0$. Then $H \not \perp F_{k}(\sigma, \cdot)$ which implies that 
$$\operatorname{span}(\{F_k(\sigma,\cdot)\}_{\sigma\in S^{n}})= \mathcal H_k(S^{n}) \subset \mathscr{L}^2(S^{n}).$$

\section{Analysis} \label{sec:theory}

Given an activation function $\theta: \mathbb R \to \mathbb R$, then since $\{\varphi_{n,k}\}_{k=0}^\infty$ form an orthogonal basis of polynomials over $[-1,1]$ with respect to some measure, we can decompose $\theta$ as 
$$\theta(t)=\sum_{k=0}^\infty a_k \varphi_{n,k}(t),$$
for some $a_0, \ldots, a_k, \ldots \in \mathbb R$. Then 
\begin{equation}  \label{eq.repr}
(\theta \circ f_x)(y) = \theta(x\cdot y) = \sum_{k=0}^\infty a_k \varphi_{n,k}(x\cdot y) = \sum_{k=0}^\infty a_k F_k(x,y).
\end{equation}
In other words one layer of the SUNLayer neural network model~\eqref{eq.layer} can be expressed as $$L_{n}(x)=\theta\circ f_x = \sum_{k}^\infty a_k F_k(x,\cdot).$$

%
%
%

Note that if $\theta$ is a polynomial of degree $K$, then $L_{n}(S^{n}) =\{ \theta\circ f_{x} : x \in S^{n}\} \subset \oplus_{k=1}^K \mathcal H_K(S^{n})$ which is finite dimensional. Reciprocally, finite dimensional subspaces of $\mathscr L^2(S^{n})$ are included in $\oplus_{k=1}^K \mathcal H_K(S^{n})$ for some finite $K$. This observation, combined with the remark from Section~\ref{sec:finite_dim} suggest that polynomial activation functions are a useful model for studying the composition of multiple layers.

Lemma \ref{lem.max} shows an alternative expression for the least squares problem \eqref{lsp}.

\begin{lemma} For all $y\in \mathscr{L}^2(S^{n})$ we have 
$$\arg \min_{x\in S^{n}} \|\theta \circ f_{x} - y\|^2_{\mathscr{L}^2(S^{n})} = \arg \max_{x \in S^{n}}\langle \theta \circ f_{x}, y\rangle_{\mathscr{L}^2(S^{n})}.$$
\label{lem.max}
\end{lemma}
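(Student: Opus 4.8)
The plan is to reduce the least-squares objective to the inner-product objective by expanding the square and isolating the only $x$-dependent term. Writing out
$$\|\theta \circ f_{x} - y\|^2_{\mathscr{L}^2(S^{n})} = \|\theta \circ f_{x}\|^2_{\mathscr{L}^2(S^{n})} - 2\langle \theta \circ f_{x}, y\rangle_{\mathscr{L}^2(S^{n})} + \|y\|^2_{\mathscr{L}^2(S^{n})},$$
the term $\|y\|^2$ is manifestly independent of $x$, so everything hinges on verifying that $\|\theta \circ f_{x}\|^2$ is also constant in $x$. Once this is established, minimizing the left-hand side over $x \in S^{n}$ is equivalent to minimizing $-2\langle \theta \circ f_{x}, y\rangle$, that is, to maximizing $\langle \theta \circ f_{x}, y\rangle$, which is precisely the claimed equality of argmin and argmax sets.

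The crux is therefore the constancy of $\|\theta \circ f_{x}\|^2$, the ``simple observation'' promised in Section~\ref{sec:finite_dim} that yields the normalizing constant $c_{n,\theta}$. I would establish it in one of two equivalent ways. Directly, $\|\theta \circ f_{x}\|^2 = \int_{S^{n}} \theta(x\cdot y)^2 \, dy$; for any two points $x, x' \in S^{n}$ there is a rotation $R$ with $Rx = x'$, and since the surface measure on $S^{n}$ is rotation invariant, the change of variables $y \mapsto Ry$ shows that the integral for $x'$ equals the integral for $x$. Hence $\|\theta \circ f_{x}\|$ takes one common value for all $x$.

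Alternatively, and more in keeping with the harmonic-analysis framework, I would invoke the expansion~\eqref{eq.repr}, namely $\theta \circ f_{x} = \sum_k a_k F_k(x,\cdot)$. Because the spaces $\mathcal H_k(S^{n})$ are mutually orthogonal in the decomposition~\eqref{l2.eq}, the cross terms vanish and Parseval gives $\|\theta\circ f_{x}\|^2 = \sum_k a_k^2 \|F_k(x,\cdot)\|^2$; the zonal property together with $F_k(x,\cdot)\in\mathcal H_k(S^{n})$ then yields $\|F_k(x,\cdot)\|^2 = F_k(x,x) = \varphi_{n,k}(x\cdot x) = \varphi_{n,k}(1)$, which does not depend on $x$. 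Thus $\|\theta\circ f_{x}\|^2 = \sum_k a_k^2\,\varphi_{n,k}(1)$ is constant in $x$. I do not anticipate any genuine obstacle here: both routes are short, and the only point requiring mild care is convergence of the relevant integral or series, which holds because $\theta \in \mathscr{L}^2(\mu_n)$ ensures $y \mapsto \theta(x\cdot y)$ lies in $\mathscr{L}^2(S^{n})$.
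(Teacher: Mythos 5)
Your proposal is correct and its main route --- expanding the square and showing $\|\theta\circ f_x\|^2$ is constant in $x$ via a rotation $R$ with $Rx=x'$ and the rotation invariance of the surface measure --- is exactly the argument the paper gives. The alternative harmonic-analysis route you sketch is also sound and essentially reproduces the computation the paper carries out separately in Lemma~\ref{lemma.theta}, so no gap either way.
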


\begin{proof}
Note that for all rotations $Q\in O(n)$ we have 
$$\theta \circ f_{Qx}(z)= \theta(x^\top Q^\top z) = (\theta \circ f_x)(Q^\top z),$$
and so
\begin{eqnarray*} 
\|\theta\circ f_{Qx}\|^2_{\mathscr{L}^2(S^{n})} &=& \int_{z\in S^{n}} |(\theta\circ f_{Qx})(z)|^2dz = \int_{z\in S^{n}} |(\theta\circ f_{x})(Q^\top z)|^2 dz 
\\
&=& \int_{z\in S^{n}}|\theta \circ f_{x}(z)|^2 dz = \|\theta \circ f_{x}\|^2.
 \end{eqnarray*}
Therefore $\|\theta\circ f_x\|$ is constant for all $x\in S^{n}$, which implies the lemma since 
$$\|\theta\circ f_x - y \|^2 = \|\theta\circ f_x\|^2 + \|y^2\| - 2\langle \theta\circ f_x , y \rangle= \text{constant} - 2 \langle \theta\circ f_x , y \rangle.$$ \end{proof}

Given $y=\theta\circ f_{x^\sharp}$, according to Lemma \ref{lem.max} and equation \eqref{eq.repr}  we need to find $x\in S^{n}$ that maximizes
\begin{eqnarray} 
 \langle \theta\circ f_x , \theta\circ f_{x^\sharp} \rangle 
&=&  \sum_{k=0}^\infty \langle a_k F_k(x, \cdot), a_k F_k(x^\sharp, \cdot) \rangle \nonumber
=  \sum_{k=0}^\infty a_k^2 F_k(x, x^\sharp) \nonumber
\\
&=&  \sum_{k=0}^\infty a_k^2 \varphi_{n,k}(x\cdot x^\sharp) =:g_\theta(x\cdot x^\sharp). \label{eq.gtheta}
\end{eqnarray}
Note that the second equality is a consequence of the reproducing property \eqref{eq.reproducing}. The function $g_\theta$ will be particularly useful in our analysis.  
\begin{definition}
Let $\theta:\mathbb R \to \mathbb R$ be  an activation function, with Gegenbauer decomposition
$\displaystyle\theta(t)=\sum_{k=0}^\infty a_k\varphi_{n,k}(t)$. Then we define $\displaystyle g_\theta:\mathbb R \to \mathbb R$ as
$g_\theta(t)=\sum_{k=0}^\infty a_k^2 \varphi_{n,k}(t).$ 
\end{definition}

\begin{lemma}
\label{lemma.abs.convergence}
If $\theta(t):[-1,1]\to \mathbb R$ is $C^2$ and  $\theta(t) = \lim_{K\to \infty} \sum_{k=0}^K a_k \varphi_{n,k}(t)$ (convergence in $\mathscr{L}^2(\mu_n)$) then the functions  $g_\theta(t) = \displaystyle\lim_{K\to \infty} \sum_{k=0}^K a_k^2 \varphi_{n,k}(t)$ and $h_\theta=\displaystyle\lim_{K\to \infty} \sum_{k=0}^K a_k^2 \varphi_{n,k}'(t)$ are well-defined (and the convergence is also point-wise and absolute). Furthermore, if $\theta$ is $C^4$ we also have that $g_{\theta}'(t) = h_\theta(t)$ for all $t\in[-1,1]$.
\end{lemma}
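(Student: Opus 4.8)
The plan is to trade smoothness of $\theta$ for decay of the coefficients $a_k$, and then to pit that decay against the known growth of $\varphi_{n,k}$ and $\varphi_{n,k}'$. I would first realize $\theta$ as a zonal function on the sphere: fixing an axis $\omega\in S^n$ and setting $h(\tau)=\theta(\omega\cdot\tau)=\sum_{k}a_kF_k(\omega,\tau)$, the $k$-th spherical-harmonic component of $h$ is $h_k=a_kF_k(\omega,\cdot)$, whose norm follows from the reproducing and zonal properties as $\|h_k\|_{\mathscr{L}^2(S^n)}^2=a_k^2F_k(\omega,\omega)=a_k^2\varphi_{n,k}(1)=a_k^2\alpha_{n,k}/\operatorname{vol}(S^n)$. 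Because $\theta\in C^2$, formula \eqref{eq.laplacian} expresses $\Delta_{S^n}h$ as the bounded function $t\mapsto\theta''(t)(1-t^2)-nt\,\theta'(t)$, so $\Delta_{S^n}h\in\mathscr{L}^2(S^n)$ and \eqref{c2.bound} with $r=1$ gives $k^2|a_k|\sqrt{\alpha_{n,k}/\operatorname{vol}(S^n)}\le\|\Delta_{S^n}h\|$, i.e. $a_k^2=O(k^{-4}/\alpha_{n,k})$.

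Next I would run the Weierstrass $M$-test twice, using the sup-norm estimates already recorded. For $g_\theta$, the bound \eqref{bound.inf} gives $\|\varphi_{n,k}\|_\infty=\alpha_{n,k}/\operatorname{vol}(S^n)$, so each term obeys $a_k^2\|\varphi_{n,k}\|_\infty=O(k^{-4})$ — the factor $\alpha_{n,k}$ cancels — which is summable, yielding absolute and uniform convergence of $g_\theta$ on $[-1,1]$. For $h_\theta$, I would use \eqref{bound.derivative} to write $\varphi_{n,k}'=c_n\varphi_{n+2,k-1}$ with $c_n$ depending only on $n$, so that $\|\varphi_{n,k}'\|_\infty=O(\alpha_{n+2,k-1})=O(k^{n+1})$; combining this with the lower bound $\alpha_{n,k}\geq \text{const}\cdot k^{n-1}$ read off from the explicit formula for $\alpha_{n,k}$ gives a term bound $a_k^2\|\varphi_{n,k}'\|_\infty=O(k^{-2})$, again summable. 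Hence $h_\theta$ too converges absolutely and uniformly, and both limits are continuous.

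Finally, for the identity $g_\theta'=h_\theta$ I would differentiate term by term. The partial sums $S_N=\sum_{k=0}^Na_k^2\varphi_{n,k}$ are polynomials with $S_N'=\sum_{k=0}^Na_k^2\varphi_{n,k}'$, and the previous step shows $S_N\to g_\theta$ and $S_N'\to h_\theta$ uniformly with $h_\theta$ continuous; the standard theorem on differentiating a uniformly convergent series (equivalently, integrating $h_\theta$ and applying the fundamental theorem of calculus) then gives that $g_\theta$ is $C^1$ with $g_\theta'=h_\theta$. This is exactly where the extra regularity is comfortable to have: assuming $\theta\in C^4$ lets me iterate \eqref{eq.laplacian} once more, so that $(\Delta_{S^n})^2h$ is still bounded and \eqref{c2.bound} with $r=2$ upgrades the coefficient bound to $a_k^2=O(k^{-8}/\alpha_{n,k})$, which makes the differentiated series decay like $O(k^{-6})$ and removes any borderline behavior in the term-by-term differentiation.

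The step I expect to be the crux is the exponent bookkeeping for $h_\theta$: differentiation raises the effective dimension from $n$ to $n+2$ through \eqref{bound.derivative}, so $\|\varphi_{n,k}'\|_\infty$ outgrows $\|\varphi_{n,k}\|_\infty$ by two powers of $k$, and one must confirm that the decay of $a_k^2$ coming from smoothness still wins. A closely related point demanding care is checking the hypothesis of \eqref{c2.bound}, namely that $(\Delta_{S^n})^rh\in\mathscr{L}^2(S^n)$ for the relevant $r$: this is where the behavior of $\theta$ near the endpoints $t=\pm1$ enters, and one must verify from the explicit Laplacian \eqref{eq.laplacian} that the $C^2$ (resp.\ $C^4$) hypothesis genuinely controls $\Delta_{S^n}h$ (resp.\ its iterate) in $\mathscr{L}^2$, so that the Parseval-type bound \eqref{c2.bound} is applicable.
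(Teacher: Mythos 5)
Your proposal is correct and, for the two convergence claims, follows the paper's argument essentially verbatim: both realize $\theta$ as the zonal function $h(\tau)=\theta(\omega\cdot\tau)$ with $h_k=a_kF_k(\omega,\cdot)$, convert smoothness into coefficient decay via \eqref{c2.bound} with $r=1$ (obtaining $a_k^2=O(k^{-4}/\alpha_{n,k})=O(k^{-(n+3)})$), and then beat the growth $\|\varphi_{n,k}\|_\infty=O(k^{n-1})$ and $\|\varphi_{n,k}'\|_\infty=O(k^{n+1})$ (via \eqref{bound.inf} and \eqref{bound.derivative}) to get summable term bounds $O(k^{-4})$ and $O(k^{-2})$. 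Where you genuinely diverge is the identity $g_\theta'=h_\theta$. The paper invokes Proposition~\ref{prop.convergence}, which additionally demands a uniform bound on the second derivatives of the partial sums; since $\|\varphi_{n,k}''\|_\infty=O(k^{n+3})$, the decay coming from $C^2$ alone is not enough there, and this is exactly why the paper assumes $\theta\in C^4$ and reapplies \eqref{c2.bound} with $r=2$ to upgrade the decay to $a_k^2=O(k^{-(n+7)})$. You instead use the standard term-by-term differentiation theorem --- pointwise convergence of $S_N$ together with uniform convergence of $S_N'$ to the continuous limit $h_\theta$, or equivalently integrating $h_\theta$ and applying the fundamental theorem of calculus --- which needs no control on $S_N''$ at all. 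Your route is therefore both more elementary and slightly stronger: it yields $g_\theta'=h_\theta$ already under the $C^2$ hypothesis, showing that the $C^4$ assumption (and the attendant second-derivative bookkeeping in the appendix) is an artifact of the particular calculus lemma the paper chose rather than a genuine requirement. Your closing caution about verifying that $(\Delta_{S^n})^r h\in\mathscr{L}^2(S^n)$ via the explicit formula \eqref{eq.laplacian} is well placed; it is the one hypothesis of \eqref{c2.bound} the paper checks only implicitly.
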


\begin{proof} See Appendix \ref{app.convergence}.
\end{proof}

\begin{lemma} If $\theta(t)=\sum_{k=0}^\infty a_k\varphi_{n,k}(t)$ then 
$$c_{n,\theta}^2=\|\theta\circ f_x\|^2_{\mathscr{L}^2(S^{n})} = \operatorname{vol}(S^{n-1}) \|\theta\|^2_{\mathscr{L}^2(\mu_n)} =   g_\theta(1) $$ \label{lemma.theta}
\end{lemma}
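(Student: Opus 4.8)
The plan is to prove the chain of equalities by treating each junction separately, drawing on the harmonic-analytic facts already assembled. The leftmost identity $c_{n,\theta}^2 = \|\theta\circ f_x\|^2_{\mathscr{L}^2(S^{n})}$ is essentially definitional: in the proof of Lemma~\ref{lem.max} we already observed that rotation invariance of the spherical measure forces $\|\theta\circ f_x\|$ to take one common value over all $x\in S^{n}$, and $c_{n,\theta}$ was introduced as precisely that value, so nothing further is needed there.

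For the middle equality I would reduce the spherical integral to a one-dimensional integral against $\mu_n$. Fixing $x\in S^{n}$ and writing $t=x\cdot y$, the integrand $|\theta(x\cdot y)|^2$ is constant on each latitude $\{y:x\cdot y=t\}$, which is a copy of $S^{n-1}$ of radius $\sqrt{1-t^2}$. The standard slicing (coarea) formula for the sphere then gives
\[
\int_{S^{n}} |\theta(x\cdot y)|^2\,dy = \operatorname{vol}(S^{n-1})\int_{-1}^1 |\theta(t)|^2 (1-t^2)^{(n-2)/2}\,dt = \operatorname{vol}(S^{n-1})\,\|\theta\|^2_{\mathscr{L}^2(\mu_n)},
\]
where I recognize $d\mu_n=(1-t^2)^{(n-2)/2}\,dt$ from \eqref{measure}. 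Specializing $\theta\equiv 1$ pins the constant to be exactly $\operatorname{vol}(S^{n-1})$, via $\operatorname{vol}(S^{n})=\operatorname{vol}(S^{n-1})\int_{-1}^1(1-t^2)^{(n-2)/2}\,dt$.

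For the rightmost equality I would pass to the Gegenbauer expansion $\theta=\sum_k a_k\varphi_{n,k}$. Orthogonality of $\{\varphi_{n,k}\}$ with respect to $\mu_n$ and Parseval give $\|\theta\|^2_{\mathscr{L}^2(\mu_n)}=\sum_k a_k^2\|\varphi_{n,k}\|^2_{\mathscr{L}^2(\mu_n)}$, and substituting \eqref{bound.norm} yields $\operatorname{vol}(S^{n-1})\|\theta\|^2_{\mathscr{L}^2(\mu_n)}=\sum_k a_k^2\,\alpha_{n,k}/\operatorname{vol}(S^{n})$. Evaluating $g_\theta(t)=\sum_k a_k^2\varphi_{n,k}(t)$ at $t=1$ and using \eqref{bound.inf}, namely $\varphi_{n,k}(1)=\alpha_{n,k}/\operatorname{vol}(S^{n})$, produces the identical series, so the two agree. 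A quicker route to the same conclusion is \eqref{eq.gtheta} with $x^\sharp=x$: since $x\cdot x=1$, we get $g_\theta(1)=g_\theta(x\cdot x)=\langle\theta\circ f_x,\theta\circ f_x\rangle=\|\theta\circ f_x\|^2$, which simultaneously reproves the leftmost identity and closes the loop. Convergence poses no difficulty under the stated hypothesis: for $\theta\in\mathscr{L}^2(\mu_n)$ the Parseval sum is finite, so $\sum_k a_k^2\varphi_{n,k}(1)$ converges and $g_\theta(1)$ is well defined (and, if one wants the pointwise statement as a function, Lemma~\ref{lemma.abs.convergence} applies).

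The computation is routine once these ingredients are assembled; the only step demanding genuine care is the slicing formula together with its normalization constant. Because the paper adopts a nonstandard normalization of the Gegenbauer polynomials, the factors $\operatorname{vol}(S^{n-1})$ and $\operatorname{vol}(S^{n})$ appearing in \eqref{bound.inf}--\eqref{bound.norm} must line up exactly with the constant produced by the coarea reduction. I would therefore treat the bookkeeping of these volume factors as the main thing to verify, using the $\theta\equiv 1$ check above as the anchor.
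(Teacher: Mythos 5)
Your proposal is correct and follows essentially the same route as the paper: rotation invariance to fix $x$, the slicing reduction of the spherical integral to $\operatorname{vol}(S^{n-1})\int_{-1}^1\theta(t)^2(1-t^2)^{(n-2)/2}dt$ (the paper writes this via the substitution $t=\cos s$), and then Parseval with \eqref{bound.norm} and \eqref{bound.inf} to identify the sum with $g_\theta(1)$. The extra observations (the $\theta\equiv 1$ normalization check and the shortcut via \eqref{eq.gtheta}) are sound but not needed.
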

\begin{proof}
Consider
$\displaystyle
\|\theta\circ f_x\|^2 = \int_{\tau\in S^n}\theta(x\cdot \tau)^2 d\tau.
$
Due to the rotational invariance observed in the proof of Lemma~\ref{lem.max} one take $x=(1,0,\ldots, 0)$, obtaining 
\begin{align*}
\|\theta\circ f_x\|^2 &= \int_{0}^\pi \theta(\cos(s))^2\sin(s)^{n-1} \operatorname{vol}(S^{n-1}) ds = \operatorname{vol}(S^{n-1}) \int_{-1}^1 \theta(t)^2 (1-t^2)^{(n-2)/2} dt \\ &= \operatorname{vol}(S^{n-1}) \|\theta\|^2_{\mathscr{L}^2(\mu_n)} \\
&= \operatorname{vol}(S^{n-1}) \int_{-1}^1 \left( \sum_{k=0}^\infty a_k \varphi_{n,k}(t) \right)^2 (1-t^2)^{(n-2)/2} dt \\
&=  \operatorname{vol}(S^{n-1})  \sum_{k=0}^\infty a_k^2  \int_{-1}^1\varphi_{n,k}(t)^2 (1-t^2)^{(n-2)/2} dt
 =  \sum_{k=0}^\infty \frac{ a_k^2 \alpha_{n,k} }{\operatorname{vol}(S^{n})} =  g_\theta(1) .
\end{align*}
The last line is due to Fubini-Tonelli and orthogonality of the Gegenbauer polynomials. The following equality is due to~\eqref{bound.norm} and the last equality is due to~\eqref{bound.inf}.
\end{proof}

\subsection{Noiseless case}
The following Theorem provides a sufficient condition that makes recovery possible in the noiseless case.

\begin{theorem}
Suppose $g_\theta'(t)>0$
for all $t\in[-1,1]$. Then for each $x^\sharp\in S^{n}$, the only
critical points of $$x\mapsto\|\theta\circ f_x-\theta\circ
f_{x^\sharp}\|^2$$ are $\pm x^\sharp$, with $x^\sharp$ being the unique
local minimizer.
\end{theorem}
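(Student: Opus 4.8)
The plan is to collapse the spherical optimization onto a one-dimensional question about the profile $g_\theta$ and then read off the critical points directly. By Lemma~\ref{lem.max} together with the identity~\eqref{eq.gtheta}, minimizing $x\mapsto\|\theta\circ f_x-\theta\circ f_{x^\sharp}\|^2$ over $S^{n}$ is equivalent to maximizing $\langle\theta\circ f_x,\theta\circ f_{x^\sharp}\rangle = g_\theta(x\cdot x^\sharp)$. More precisely, since $\|\theta\circ f_x\|^2=\|\theta\circ f_{x^\sharp}\|^2=g_\theta(1)$ by Lemma~\ref{lemma.theta}, the objective equals $2g_\theta(1)-2g_\theta(x\cdot x^\sharp)$, so the objective and $F(x):=g_\theta(x\cdot x^\sharp)$ have exactly the same critical points on $S^{n}$, with local minima of the former corresponding to local maxima of the latter. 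Before differentiating I would invoke Lemma~\ref{lemma.abs.convergence} to guarantee that $g_\theta$ is $C^1$ on $[-1,1]$ with an absolutely convergent derivative series, so that $F$ is a genuine smooth function on the sphere.

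Next I would locate the critical points of $F$ on $S^{n}$ via the projected (Riemannian) gradient. The ambient gradient of $x\mapsto g_\theta(x\cdot x^\sharp)$ in $\mathbb R^{n+1}$ is $g_\theta'(x\cdot x^\sharp)\,x^\sharp$, and projecting onto the tangent space $T_xS^{n}=x^\perp$ yields
\begin{equation*}
\operatorname{grad}F(x)=g_\theta'(x\cdot x^\sharp)\bigl(x^\sharp-(x\cdot x^\sharp)\,x\bigr).
\end{equation*}
This is where the hypothesis enters: since $x\cdot x^\sharp\in[-1,1]$ and $g_\theta'(t)>0$ on all of $[-1,1]$ (including the endpoints), the scalar $g_\theta'(x\cdot x^\sharp)$ never vanishes, so $\operatorname{grad}F(x)=0$ forces $x^\sharp=(x\cdot x^\sharp)\,x$. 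Taking norms gives $|x\cdot x^\sharp|=1$, hence $x=\pm x^\sharp$, and these are the only critical points.

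Finally I would classify the two critical points. Since $g_\theta'>0$ the profile $g_\theta$ is strictly increasing, so $g_\theta(1)>g_\theta(-1)$, which already shows $x^\sharp$ is the global maximizer of $F$, i.e.\ the global minimizer of the objective. To confirm the local picture I would restrict $F$ to an arbitrary great circle through $x^\sharp$, writing $x(\phi)=\cos\phi\,x^\sharp+\sin\phi\,v$ with $v\perp x^\sharp$ a unit vector; then $x\cdot x^\sharp=\cos\phi$ and $\tfrac{d}{d\phi}F(x(\phi))=-g_\theta'(\cos\phi)\sin\phi<0$ for small $\phi>0$, so $F$ strictly decreases away from $x^\sharp$ in every tangent direction and $x^\sharp$ is a strict local maximum of $F$, equivalently a strict local minimum of the objective. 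The identical computation at $-x^\sharp$ (replacing $x^\sharp$ by $-x^\sharp$) shows $F$ strictly increases away from $-x^\sharp$, making $-x^\sharp$ a strict local minimum of $F$ and therefore a local maximum of the objective, not a minimizer. Hence $\pm x^\sharp$ are the only critical points and $x^\sharp$ is the unique local minimizer.

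The computation itself is short, so I do not expect a serious analytic obstacle; the main point requiring care is the reduction and the regularity behind it, namely that the interchange of differentiation with the Gegenbauer series (hence the smoothness of $g_\theta$ and the validity of $\operatorname{grad}F$) is legitimate, which is exactly what Lemma~\ref{lemma.abs.convergence} supplies. The only other subtlety worth stating explicitly is that the hypothesis is needed up to the closed endpoints $t=\pm1$, since the critical value $x\cdot x^\sharp=1$ (resp.\ $-1$) is attained precisely at the points $\pm x^\sharp$ we must analyze.
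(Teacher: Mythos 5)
Your proposal is correct and follows essentially the same route as the paper: reduce via Lemma~\ref{lem.max} and \eqref{eq.gtheta} to the critical points of $x\mapsto g_\theta(x\cdot x^\sharp)$ on the sphere, and use the first-order condition (your projected gradient is the same computation as the paper's Lagrange multipliers) together with $g_\theta'>0$ to force $x=\pm x^\sharp$. Your great-circle second-derivative check is a welcome addition --- the paper only compares the values $g_\theta(1)>g_\theta(-1)$ and does not explicitly rule out $-x^\sharp$ being a local minimizer --- but it does not change the overall argument.
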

\begin{proof}
Lemma \ref{lem.max} and equation \eqref{eq.gtheta} imply that critical points of $x\mapsto\|\theta\circ f_x-\theta\circ
f_{x^\sharp}\|^2$ coincide with critical points of $x\mapsto g_{\theta}(x\cdot{x^\sharp})$. In fact, local minima of the former correspond with local maxima of the latter. 
Using Lagrange multipliers we have $\mathcal L_{n}(x, \lambda)= g_\theta(x\cdot x^\sharp) + \lambda(\|x\|^2-1)$ which gives optimality conditions
\begin{align*}
\left\{ \begin{matrix*}[l]0=\nabla_x\mathcal L_{n} = g_{\theta}'(x\cdot x^\sharp) x^\sharp + 2\lambda x \\
0= \frac{\partial}{\partial \lambda} \mathcal L_{n} = \|x\|^2-1 \end{matrix*}\right.
\end{align*}
If $g_\theta'(t)\neq 0$ for all $t\in[-1,1]$, then $\lambda\neq 0$ which implies $x=\frac{g_\theta'(x\cdot x^\sharp)}{-2\lambda} x^\sharp=\pm x^\sharp$. Since $g_\theta(1)>g_\theta(-1)$ then $x=x^\sharp$. 
\end{proof}

\subsection{Denoising}
The following Theorem is the main result of this paper.
\begin{theorem} \label{thm.denoising}
Let $\displaystyle\theta(t)=\sum_{k=0}^K a_k \varphi_{n,k}(t)$ and $y= \theta \circ f_{x^\sharp} + \eta$. We decompose $\eta$ as follows:
$$\eta = \sum_{k=0}^\infty \sum_{i=1}^{d_k} e_{k,i} F_k(\sigma_{k,i},\cdot)=: \sum_{k=0}^\infty \eta_k \text{ with } \eta_k\in \mathcal H_k(S^{n}).$$
Denoting $\displaystyle \epsilon := \sup_{x\in S^n}\left\| \sum_{k=0}^K a_k \sum_{i=1}^{d_k}e_{k,i}\varphi'_{n,k}(x\cdot \sigma_{k,i}) \sigma_{k,i} \right \|$ and $ T := \inf_{t\in[-1,1]} |g_\theta'(t)|$, suppose the noise $\eta$ is small enough that $\epsilon<T$. Then
\begin{enumerate}
\item[\textbf{(a)}] Every critical point $\hat x$ of  
$ x \mapsto \|\theta \circ f_{x} - y\|^2_{\mathscr{L}^2(S^{n})}$
satisfies that $| \hat x\cdot x^\sharp | > 1 -\tfrac{2\epsilon}{T+\epsilon}$
\item[\textbf{(b)}] Define $M_k:=\max_{t\in [-1,1]} |\varphi'_{n,k}(t)|$, then $\displaystyle \epsilon \leq \sum_{k=1}^K M_k|a_k|\|\eta_k\|$.
\end{enumerate}
\end{theorem}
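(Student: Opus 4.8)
The plan is to reduce both parts to the representation of the objective derived in~\eqref{eq.gtheta}. For part~(a) I would first invoke Lemma~\ref{lem.max} to replace the least-squares objective by $x\mapsto\langle\theta\circ f_x,y\rangle$, and then expand this inner product. Writing $\theta\circ f_x=\sum_{k=0}^K a_kF_k(x,\cdot)$, using orthogonality across degrees together with the reproducing property~\eqref{eq.reproducing}, the signal part contributes $g_\theta(x\cdot x^\sharp)$ exactly as in~\eqref{eq.gtheta}, while the noise part contributes $\sum_{k=0}^K a_k\sum_i e_{k,i}\varphi_{n,k}(x\cdot\sigma_{k,i})$. Differentiating in $x\in\mathbb R^{n+1}$ gives $\nabla_x\langle\theta\circ f_x,y\rangle=g_\theta'(x\cdot x^\sharp)\,x^\sharp+v(x)$, where $v(x)=\sum_{k}a_k\sum_i e_{k,i}\varphi_{n,k}'(x\cdot\sigma_{k,i})\sigma_{k,i}$ is exactly the vector whose norm is $\epsilon$ (evaluated at the critical point in question).

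At a critical point $\hat x$ on the sphere the Lagrange condition forces this gradient to be parallel to $\hat x$, i.e.\ $g_\theta'(\hat x\cdot x^\sharp)\,x^\sharp+v(\hat x)=2\lambda\hat x$ for some multiplier $\lambda$. From here the argument is purely algebraic. Taking the Euclidean norm and using the triangle inequality gives $|2\lambda|\le|g_\theta'(\hat x\cdot x^\sharp)|+\epsilon$; taking the inner product with $x^\sharp$ and using the reverse triangle inequality together with $|v(\hat x)\cdot x^\sharp|\le\epsilon$ gives $|2\lambda\,(\hat x\cdot x^\sharp)|\ge|g_\theta'(\hat x\cdot x^\sharp)|-\epsilon$. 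Writing $t=\hat x\cdot x^\sharp$ and $G=|g_\theta'(t)|\ge T$ and dividing yields $|t|\ge\frac{G-\epsilon}{G+\epsilon}=1-\frac{2\epsilon}{G+\epsilon}$; since the right side is increasing in $G$ and $G\ge T$, we obtain $|t|\ge 1-\frac{2\epsilon}{T+\epsilon}$, which is the claim. The hypothesis $T>\epsilon$ guarantees $2\lambda\neq0$, so the division is legitimate.

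For part~(b) I would first split $\epsilon=\|\sum_{k=1}^K a_k w_k\|\le\sum_{k=1}^K|a_k|\,\|w_k\|$ by the triangle inequality, where $w_k=\sum_i e_{k,i}\varphi_{n,k}'(x\cdot\sigma_{k,i})\sigma_{k,i}$; the $k=0$ term is absent because $\varphi_{n,0}$ is constant, so $\varphi_{n,0}'\equiv0$. It then suffices to prove the per-degree estimate $\|w_k\|\le M_k\|\eta_k\|$. The natural route is to test $w_k$ against an arbitrary unit vector $u$, recognize $w_k\cdot u=\sum_i e_{k,i}\varphi_{n,k}'(x\cdot\sigma_{k,i})(\sigma_{k,i}\cdot u)$ as a pairing of the data $(e_{k,i},\sigma_{k,i})$ defining $\eta_k$ against the function $\tau\mapsto\varphi_{n,k}'(x\cdot\tau)(\tau\cdot u)$, rewrite this pairing through the reproducing property~\eqref{eq.reproducing} so that a factor $\|\eta_k\|$ appears by Cauchy--Schwarz, and finally feed in the pointwise bound $|\varphi_{n,k}'|\le M_k$ to supply the factor $M_k$.

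The main obstacle is precisely this per-degree estimate. The delicate point is that $\tau\mapsto\varphi_{n,k}'(x\cdot\tau)(\tau\cdot u)$ is a degree-$k$ polynomial that in general does not lie in $\mathcal H_k(S^n)$, so the reproducing property cannot be applied to it verbatim and one must first project onto $\mathcal H_k$; equivalently, $w_k$ is the full Euclidean gradient of the kernel extension of $\eta_k$, whose radial component is sensitive to the chosen representation of $\eta_k$, whereas $\|\eta_k\|$ is representation-independent. I expect the clean constant $M_k$ to emerge only after this projection/normalization is handled carefully---for instance by passing to the homogeneous harmonic extension of $\eta_k$, so that the radial part of the gradient is controlled via Euler's identity, before invoking $|\varphi_{n,k}'|\le M_k$---and this is the step where the spherical-measure constants and the derivative identity~\eqref{bound.derivative} must be tracked to avoid spurious dimensional factors.
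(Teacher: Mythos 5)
Your part (a) is correct and follows the paper's proof essentially verbatim: Lemma~\ref{lem.max}, the reproducing property to reduce the objective to $g_\theta(x\cdot x^\sharp)+\sum_k a_k\sum_i e_{k,i}\varphi_{n,k}(x\cdot\sigma_{k,i})$, the Lagrange condition $g_\theta'(\hat x\cdot x^\sharp)x^\sharp+v(\hat x)=2\lambda\hat x$, and the two estimates $|2\lambda|\le|g_\theta'|+\epsilon$ and $|2\lambda(\hat x\cdot x^\sharp)|\ge|g_\theta'|-\epsilon$ followed by division and monotonicity in $G$. (Your version is in fact slightly cleaner than the paper's, which momentarily drops the $-\epsilon$ before reinstating it; and you correctly make explicit the implicit hypothesis $\epsilon<T$ that the paper invokes as ``$\|B\|<\|A\|$''.)

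Part (b), however, is not a proof: you state the per-degree estimate $\|w_k\|\le M_k\|\eta_k\|$ as the goal, correctly identify the obstruction (the function $\tau\mapsto\varphi_{n,k}'(x\cdot\tau)(\tau\cdot u)$ is not in $\mathcal H_k(S^n)$, and $w_k$ depends on the chosen representation of $\eta_k$ while $\|\eta_k\|$ does not), and then leave the resolution as a hope. Your worry is well-founded: for an arbitrary decomposition $\eta_k=\sum_i e_{k,i}F_k(\sigma_{k,i},\cdot)$ the coefficients can be large with cancellations, and the claimed bound on $\|w_k\|$ can fail. The missing idea is that the paper does not prove the estimate for an arbitrary decomposition; it \emph{chooses} the points $\{\sigma_{k,i}\}_{i=1}^{N_k}$ to form a spherical $2k$-design, so that $\{F_k(\sigma_{k,i},\cdot)\}_i$ is a tight frame for $\mathcal H_k(S^n)$ with frame constant $N_k$ (this is the paper's separate lemma on $t$-designs). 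With that choice one has $e_{k,i}=\frac{1}{N_k}\langle\eta,F_k(\sigma_{k,i},\cdot)\rangle$ and $\sum_i|e_{k,i}|^2=\frac{1}{N_k}\|\eta_k\|^2$, and the rest is elementary: the crude triangle inequality gives
\begin{equation*}
\|w_k\|\le\sum_{i=1}^{N_k}|e_{k,i}|\,|\varphi_{n,k}'(x\cdot\sigma_{k,i})|\,\|\sigma_{k,i}\|\le M_k\sum_{i=1}^{N_k}|e_{k,i}|,
\end{equation*}
and Cauchy--Schwarz gives $\sum_i|e_{k,i}|\le\sqrt{N_k}\bigl(\sum_i|e_{k,i}|^2\bigr)^{1/2}=\|\eta_k\|$. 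No reproducing-property argument on the gradient, no projection onto $\mathcal H_k$, and no harmonic extension is needed; the tight-frame normalization is precisely what kills the representation-dependence you flagged. Without this choice of decomposition (or some substitute for it), your sketch of (b) does not close.
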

\begin{proof}\textbf{of Theorem \ref{thm.denoising} \textit{(a)}} 
According to Lemma \ref{lem.max} we need to solve 
$$\max_{x\in S^{n}} \langle \theta\circ f_x , \theta\circ f_{x^\sharp} + \eta \rangle = \max_{x\in S^{n}} g_\theta(x\cdot x^\sharp) + \langle \theta \circ f_x, \eta \rangle.$$
The reproducing property implies
\begin{eqnarray*}\langle \theta\circ f_x, \eta \rangle 
&=& \sum_{k=0}^K \langle a_k F_k(x,\cdot), \sum_{i=1}^{d_k} e_{k,i} F_k(\sigma_{k,i}, \cdot) \rangle 
\\
&=& \sum_{k=0}^K a_k \sum_{i=1}^{d_k} e_{k,i} \varphi_{n,k}(x\cdot \sigma_{k,i}).
\end{eqnarray*}
Therefore the denoising objective is 
\begin{equation}
\max_{x\in S^{n}} g_\theta(x\cdot x^\sharp) + \sum_{k=0}^K a_k \sum_{i=1}^{d_k} e_{k,i}\varphi(x\cdot \sigma_{k,i}) \label{objective.noise}
\end{equation}
For $x$ critical point of \eqref{objective.noise} Lagrange multipliers give us
$$\mathcal L_{n}(x, \lambda)= g_\theta(x\cdot x^\sharp) + \sum_{k=0}^K a_k \sum_{i=1}^{d_k} e_{k,i} \varphi(x\cdot \sigma_{k,i}) + \lambda( \|x\|^2 -1)$$
and $\frac{\partial}{\partial x}\mathcal L_{n} =0$ implies
$$0=\underset{A}{\underbrace{g_\theta'(x\cdot x^\sharp)x^\sharp}} + \underset{B}{\underbrace{\sum_{k=0}^K a_k \sum_{i=1}^{d_k}e_{k,i}\varphi'_{n,k}(x\cdot \sigma_{k,i}) \sigma_{k,i}}} +  2 \lambda x$$

By hypothesis, we have $\|B\| \leq \epsilon < T \leq \|A\|$, and so
\[
\|2\lambda x\| = \|A+B\|\geq \|A\|-\|B\| > 0,
\]
which implies $\lambda\neq 0$.
Therefore
$$x=\frac{-1}{2\lambda}(g_\theta'(x\cdot x^\sharp )x^\sharp +B)$$
and 
$$2|\lambda|= \|g_\theta'(x\cdot x^\sharp)x^\sharp + B \| \leq |g_\theta'(x\cdot x^\sharp)| +\epsilon,$$
and so
$$|x\cdot x^\sharp| = \frac{1}{2|\lambda|}|g_\theta'(x\cdot x^\sharp) + B\cdot x^\sharp| \geq \frac{|g_\theta'(x\cdot x^\sharp)|}{2|\lambda|}\geq \frac{|g_\theta'(x\cdot x^\sharp)| - \epsilon}{|g_\theta'(x\cdot x^\sharp)| + \epsilon} \geq 1 - \frac{2\epsilon}{T+\epsilon}.$$
\end{proof}

The key parameter $\epsilon$ in Theorem \ref{thm.denoising} depends on both the noise $\eta$ and the activation function $\theta$. In order to understand the behavior of $\epsilon$ in terms of the noise $\eta$, and prove Theorem \ref{thm.denoising} \textit{(b)}, we choose $\{\sigma_{k,i}\}_i$ ($i=1,\ldots N)$ so that  $\{ F_k(\sigma_{k,i},\cdot)\}_i$ forms a tight frame. To this end it suffices for $\{\sigma_{k,i}\}_i$ to form a spherical $t$-design for $t=2k$.

\begin{definition}[Spherical $t$-design]
A spherical $t$-design is a sequence of $N_t$ points $\{x_1, \ldots, x_{N_t}\}\subset S^{n}$ such that for every polynomial $p$ of degree at most $2t$ we have
$$\frac{1}{N_t}\sum_{i=1}^{N_t} p(x_i) = \int_{S^{n}}p(x)dx.$$
\end{definition}

\begin{definition}[Tight frame]
Let $(V,\langle \cdot, \cdot \rangle)$ be vector space with an inner product. A tight frame is a sequence $\{v_k\}_{k\in I \subseteq \mathbb N} \subset V$ such that there exists a constant $c$ so that  for all $v\in V$
$$\sum_{k\in I} |\langle v, v_k\rangle|^2=c\|v\|^2.$$ 
\end{definition}

\begin{lemma}
If $\{\sigma_{k,i}\}_{i=1}^N$ form a spherical $t$-design with $t=2k$ then $\{ F_k(\sigma_{k,i},\cdot)\}_i$ is a tight frame for $\mathcal H_k(S^n)$ with constant $c=N_k$.
\end{lemma}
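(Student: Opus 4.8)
The plan is to reduce the claimed frame identity to the defining quadrature property of the design, with the reproducing property of $F_k$ serving as the bridge. First I would fix an arbitrary $H\in\mathcal H_k(S^{n})$ and write out the quantity appearing in the definition of a tight frame, namely $\sum_{i=1}^N |\langle H, F_k(\sigma_{k,i},\cdot)\rangle|^2$. Since each frame vector $F_k(\sigma_{k,i},\cdot)$ lies in $\mathcal H_k(S^{n})$, the reproducing property \eqref{eq.reproducing} applies and gives $\langle H, F_k(\sigma_{k,i},\cdot)\rangle = H(\sigma_{k,i})$. Thus the frame sum collapses to $\sum_{i=1}^N |H(\sigma_{k,i})|^2$, i.e. the sum of the squared samples of $H$ at the design points.

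The second step is to recognize this sum as an exact quadrature. The function $p:=|H|^2$ is the restriction to $S^{n}$ of a polynomial of degree $2k$ (for real $H$ it is simply $H^2$; in the complex case $p=(\operatorname{Re}H)^2+(\operatorname{Im}H)^2$, still of total degree $2k$). Because $\{\sigma_{k,i}\}_{i=1}^N$ is taken to be a design whose exactness range covers degree $2k$, the design property applies verbatim to $p$, yielding $\tfrac{1}{N}\sum_{i=1}^N |H(\sigma_{k,i})|^2 = \int_{S^{n}}|H(x)|^2\,dx = \|H\|^2$. Chaining this with the first step gives $\sum_{i=1}^N |\langle H, F_k(\sigma_{k,i},\cdot)\rangle|^2 = N\|H\|^2$, which is exactly the tight-frame condition with constant $c=N$. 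Since $H\in\mathcal H_k(S^{n})$ was arbitrary and we already know $\operatorname{span}\{F_k(\sigma,\cdot)\}_{\sigma\in S^{n}}=\mathcal H_k(S^{n})$, the vectors $\{F_k(\sigma_{k,i},\cdot)\}_i$ form a tight frame for $\mathcal H_k(S^{n})$, as claimed.

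There is no real analytic difficulty here; the whole argument is the single substitution ``reproducing property, then exact quadrature.'' The only genuine points to watch are bookkeeping ones. The first is a \emph{degree check}: one must confirm that $|H|^2$ has degree $2k$ and hence falls within the range the spherical design integrates exactly, which is why the design is required at the level $t=2k$. The second is a \emph{normalization check}: obtaining the clean constant $c=N$ (rather than $c=N/\operatorname{vol}(S^{n})$) requires that the probability measure implicit in the design definition and the measure defining $\|\cdot\|$ on $\mathcal H_k(S^{n})$ are taken consistently. I would simply fix the normalized surface measure throughout this step so that $\int_{S^{n}}|H|^2\,dx=\|H\|^2$ and the quadrature weight is exactly $1/N$, after which the constant $c=N$ drops out immediately.
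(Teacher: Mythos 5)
Your proof is correct and follows essentially the same route as the paper's: apply the reproducing property to turn the frame coefficients into point evaluations, then invoke the design's exact quadrature on the degree-$2k$ polynomial product. The only cosmetic difference is that you verify the tight-frame identity directly for an arbitrary $H\in\mathcal H_k(S^{n})$, whereas the paper checks the equivalent condition $\sum_i Y_j(\sigma_{k,i})\overline{Y_{j'}(\sigma_{k,i})}=N_k\delta_{j,j'}$ on an orthonormal basis; your normalization remark about the surface measure is a fair point that the paper handles implicitly.
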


\begin{proof} Let $\{Y_j\}$ be an orthonormal basis for $\mathcal H_k(S^n)$. Consider $\delta_{a,b}=1$ if $a=b$ and 0 otherwise. It suffices to show 
\begin{eqnarray*}
N_k\delta_{j,j'} &=& \sum_{i=1}^{N_k}\langle F_k(\sigma_{k,i}, \cdot), Y_j \rangle \overline{\langle F_k(\sigma_{k,i}, \cdot), Y_{j'}\rangle} \\
&=& \sum_{i=1}^{N_k}\left(\int_{\tau\in S^n}  F_k(\sigma_{k,i}, \tau) \overline{Y_j(\tau)}\right) \overline{\left(\int_{\tau\in S^n}  F_k(\sigma_{k,i}, \tau) \overline{Y_{j'}(\tau)}\right) } \\
&=& \sum_{i=1}^{N_k}\left(\sum_{j''}Y_{j''}(\sigma_{k,i})\int_{\tau\in S^n} Y_{j''}(\tau) \overline{Y_j(\tau)}\right) \overline{ \sum_{j'''}Y_{j'''}(\sigma_{k,i})\int_{\tau\in S^n} Y_{j'''}(\tau) \overline{Y_{j'}(\tau)} } 
\\
&=&\sum_{i=1}^{N_k}Y_j(\sigma_{k,i})\overline{Y_{j'}(\sigma_{k,i})} 
\end{eqnarray*}
Observe that if $Y_j, Y_{j'}\in \mathcal H_k(S^n)$ then $p(x)=Y_{j}(x)Y_{j'}(x)$ is a polynomial of degree $2k$. Then using the $t$-design property we get
\begin{eqnarray*}
\sum_{i=1}^{N_k}Y_j(\sigma_{k,i})\overline{Y_{j'}(\sigma_{k,i})} &=& N_k \frac{1}{N_k} \sum_{i=1}^{N_k}Y_j(\sigma_{k,i})\overline{Y_{j'}(\sigma_{k,i})} \\
&=& N_k \int_{\tau\in S^n}Y_{j}(\tau)\overline{Y_{j'}(\tau)} \\
&=& N_k\langle Y_j, Y_{j'} \rangle_{\mathcal H_k(S^n)}
\end{eqnarray*}
which proves the theorem.
\end{proof}

\begin{proof}[Proof of Theorem \ref{thm.denoising} \textit{(b)}]
We choose $\{\sigma_{k,i}\}_{i=1}^N$ so that $\{ F_k(\sigma_{k,i},\cdot)\}_i$ is a tight frame for $\mathcal H_k(S^n)$ with constant~$N_k$. We write $\eta = \sum_{k=0}^\infty \sum_{i=1}^{d_k} e_{k,i} F_k(\sigma_{k,i},\cdot)$. One can uniquely decompose
$\eta= \sum_{k=0}^{\infty} \eta_k$ with $\eta_k\in \mathcal H_k(S^n)$ and we have $\sum_{i}|e_{k,i}|^2 = \frac{1}{N_k} \|\eta_k\|^2$. In fact $e_{k,i}$ can be chosen so that $e_{k,i}=\frac{1}{N_k}\langle \eta, F_k(\sigma_{k,i},\cdot) \rangle$.
Following the notation in the proof of Theorem \ref{thm.denoising} \textit{(a)} we have:
$$B=\sum_{k=1}^K a_k \sum_{i=1}^{N_k} e_{k,i} \varphi'_{n,k}(x\cdot \sigma_{k,i})\sigma_{k,i}$$
and when $x\in S^n$ maximizes $\|B\|$, we have
$$\epsilon = \|B\|\leq \sum_{k=1}^K |a_k| \left \| \sum_{i=1}^{N_k}  e_{k,i} \varphi'_{n,k}(x\cdot \sigma_{k,i})\sigma_{k,i} \right \|$$ 
Let $G_{k,x}: \mathscr{L}^2(S^{n})\to S^{n}$ such that $G_{k,x}(\eta)= \sum_{i=1}^{N_k} \frac{1}{N_k} \langle e, F_k(\sigma_{k,i}, \cdot) \rangle \varphi_{n,k}'(x\cdot \sigma_{k,i})\sigma_{k,i}$. Let 
$$\|G_{k,x}\|_{2\to 2} = \sup_{\|\nu\|_{\mathscr{L}^2(S^{n})} =1 }\|G_{k,x}(\nu) \|_{S^{n}}$$
then for all $x\in S^{n}$ we have
$$\epsilon \leq \left( \sum_{k=0}^K |a_k| \|G_{k,x}\|_{2\to 2} \right)\|\eta\|.$$
Since $M_k=\max_{t\in [-1,1]} |\varphi'_{n,k}(t)|$  we bound
$$\max_{x}\|G_{k,x}\|_{2\to2} \leq \frac{M_k}{N_k}\sum_{i=1}^{N_k} \langle \eta, F(\sigma_{k,i}, \cdot)\rangle  \|\sigma_{k,i}\| = M_k \sum_{i=1}^{N_k} |e_{k,i}|,$$
obtaining the bound 
$$\epsilon \leq \sum_{k=1}^K M_k |a_k| \sum_{i=1}^{N_k}|e_{k,i}|.$$
Using Theorem \ref{thm.denoising} \textit{(a)} we conclude that denoising is possible provided that 
$$\sum_{k=1}^K M_k |a_k| \sum_{i=1}^{N_k}|e_{k,i}| \leq \inf_{t\in[-1,1]} \sum_{k=1}^K a_k^2 \varphi_{n,k}'(t).$$
Note that the left hand side depends on the activation function $\theta$ and the noise $\eta$ whereas the right hand side depends only on $\theta$. Using the frame properties and Cauchy-Schwarz inequality one can write
$$\epsilon \leq \sum_{k=1}^k M_k |a_k| \|\eta_k\|$$
and prove Theorem \ref{thm.denoising} \textit{(b)}. Note that this implies a sufficient condition for denosing
$$\sum_{k=1}^k M_k |a_k| \|\eta_k\| < \inf_{t\in[-1,1]} \sum_{k=1}^K a_k^2 \varphi_{n,k}'(t), $$
that does not depend on the frame choice.
\end{proof}
\section{Numerical experiments}

\begin{figure}
\hspace{-0.06\textwidth}
\includegraphics[width=0.33\textwidth]{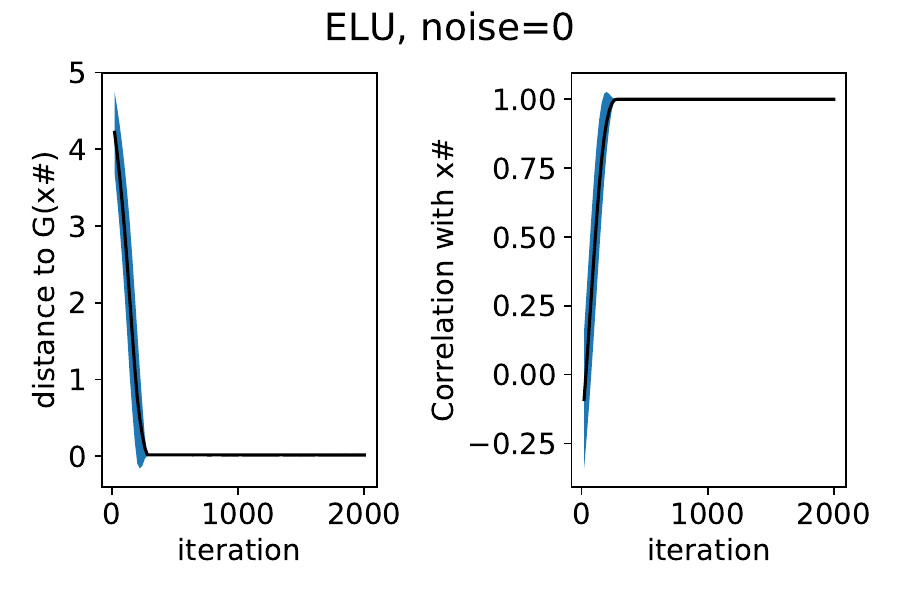}\hspace{-0.01\textwidth}
\includegraphics[width=0.33\textwidth]{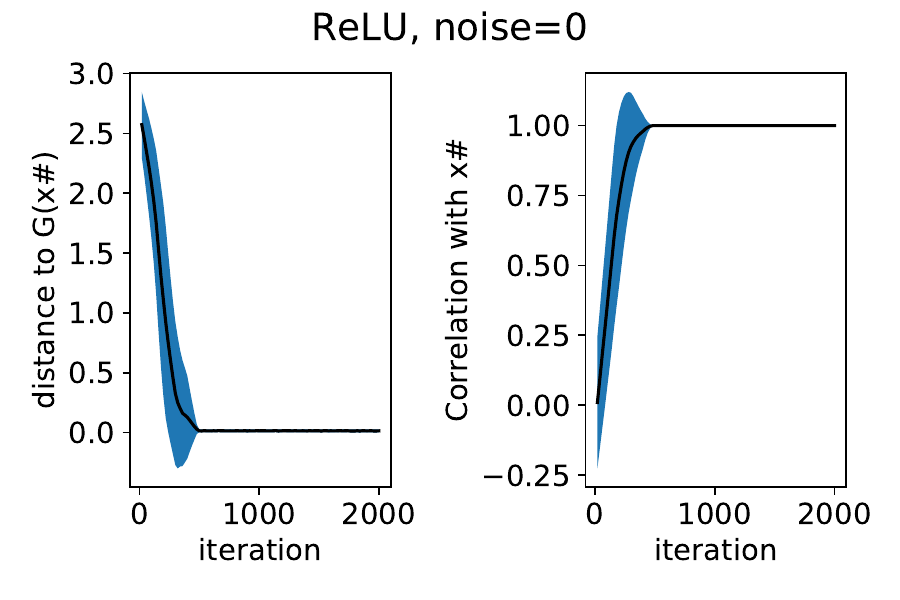} \hspace{-0.02\textwidth}
\includegraphics[width=0.33\textwidth]{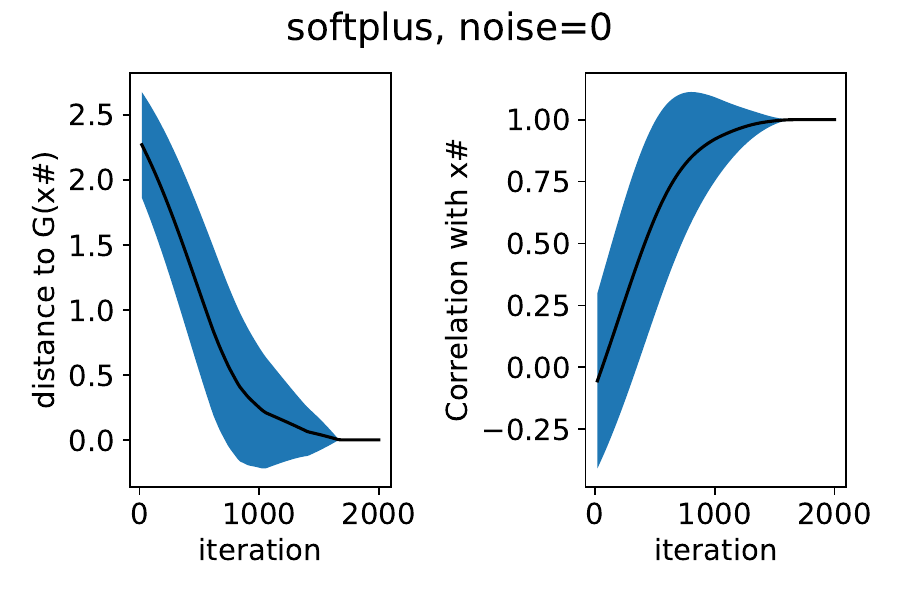}

\hspace{-0.06\textwidth}
\includegraphics[width=0.33\textwidth]{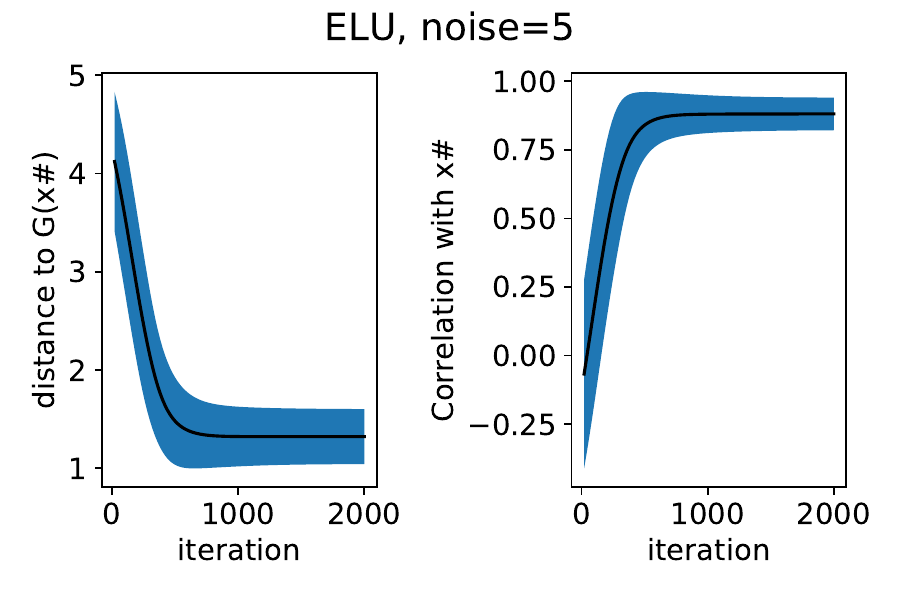}\hspace{-0.01\textwidth}
\includegraphics[width=0.33\textwidth]{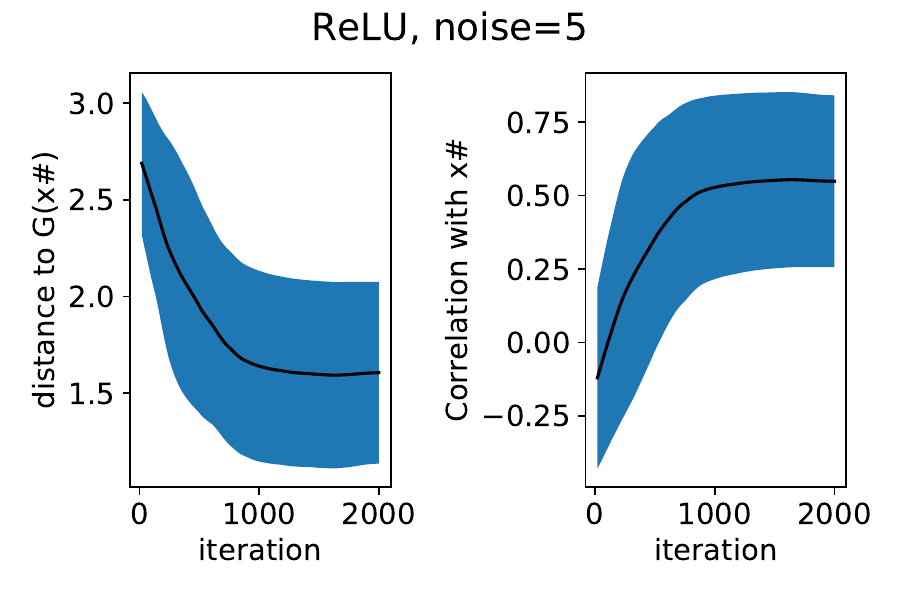} \hspace{-0.02\textwidth}
\includegraphics[width=0.33\textwidth]{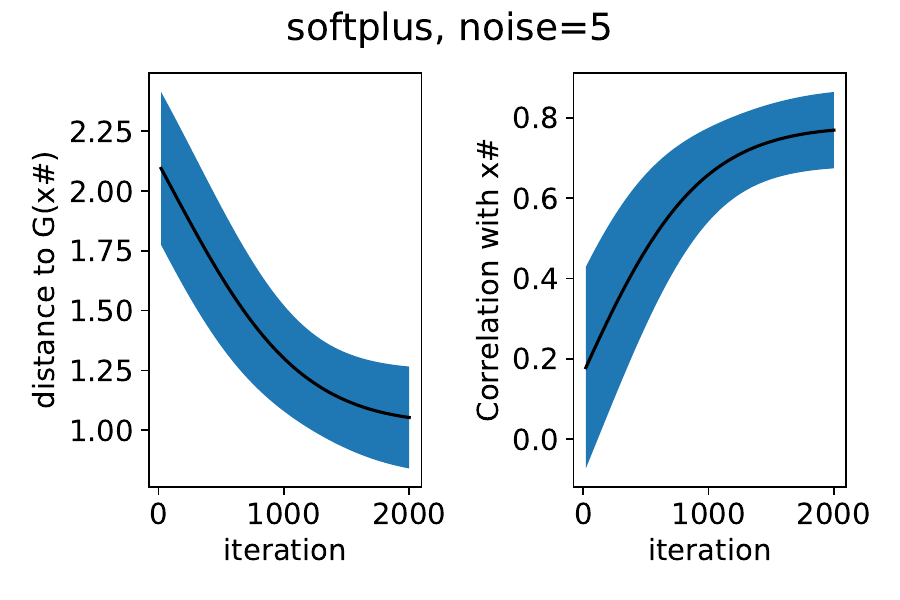}

\hspace{-0.06\textwidth}
\includegraphics[width=0.33\textwidth]{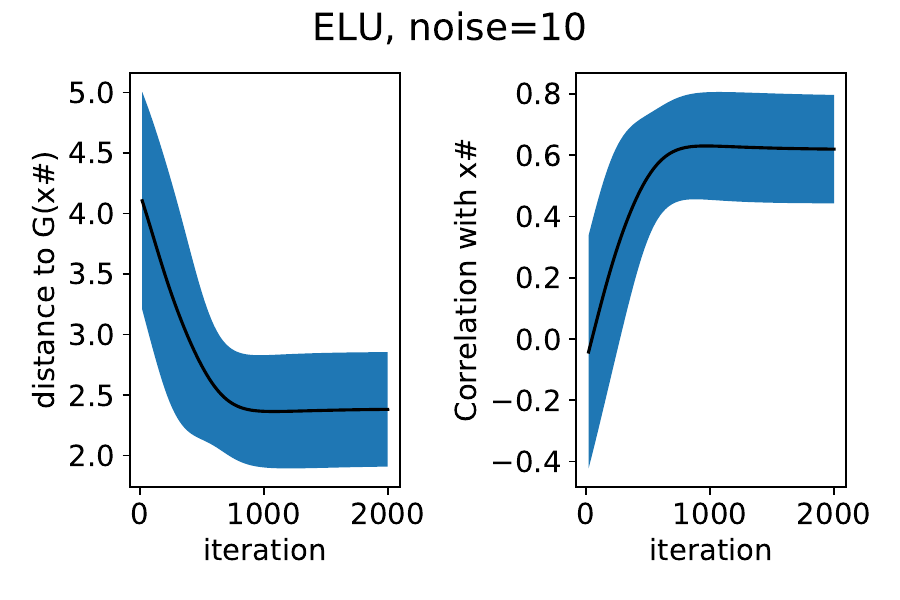}\hspace{-0.01\textwidth}
\includegraphics[width=0.33\textwidth]{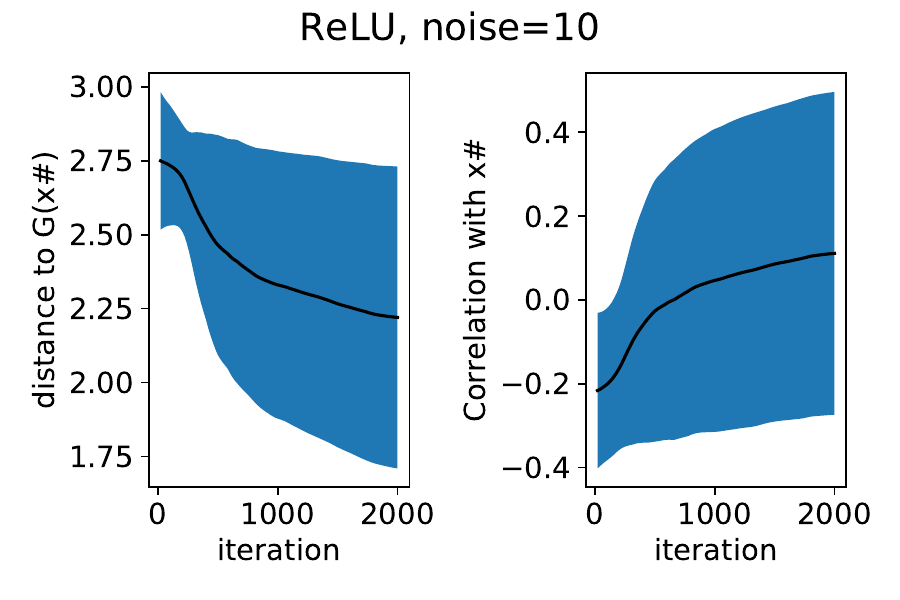} \hspace{-0.02\textwidth}
\includegraphics[width=0.33\textwidth]{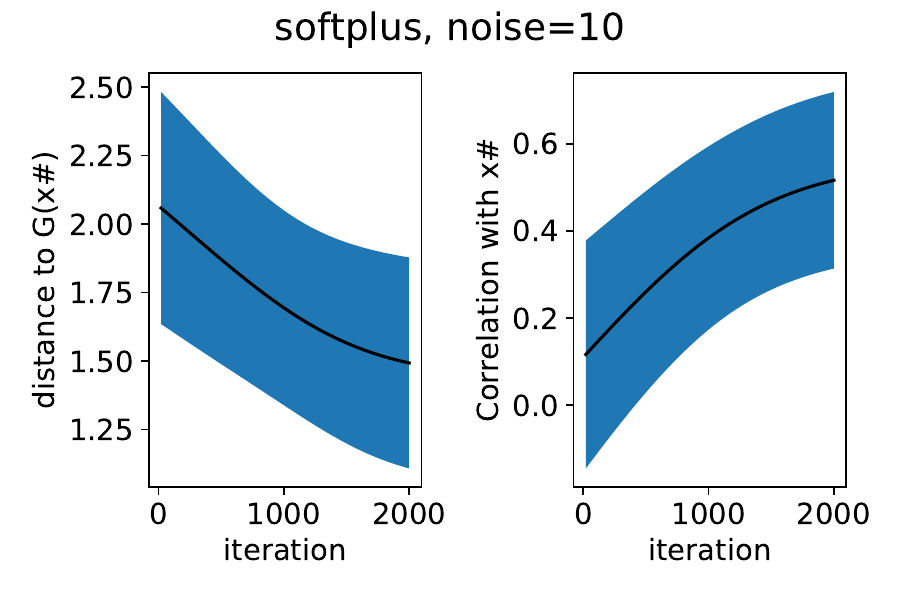}
\caption{Denoising performance of the SUNLayer for different activation functions}{\label{fig:synthetic} We consider ReLU, ELU and softplus. The denoising performance of ELU is superior to ReLU and softplus, which is consistent with the theory from Section~\ref{sec:theory} and the properties shown in Figure~\ref{fig.activation2}. Note that softplus satisfies that $g_\theta'(t)>0$ for all $t\in[-1,1]$ but the values of $g_\theta'(t)$ are close to zero. }
\end{figure}   

\subsection{Denoising a generative model for MNIST}
Figure~\ref{fig.denoising} shows denosing of simple images using a generative model in comparison with classical denoising algorithms. In order to produce the generative model we use a 4-layer convolutional neural network with ELU as the activation function. We train it in the entire MNIST training set using stochastic gradient descent.

\subsection{Gegenbauer approximations of common activation functions}
Figures \ref{fig.activation} and \ref{fig.activation2} consider the most common activation functions. The first column plots the activation function, the second column shows the Gegenbauer approximation truncating to $K=30$. The third column plots $g_\theta(t)$ for the approximation of $\theta$ from the second column and the fourth column plots $g_\theta'(t)$. The difference between Figures \ref{fig.activation} and \ref{fig.activation2} is the space considered (in Figure \ref{fig.activation} $n=2$ whereas in Figure \ref{fig.activation2} $n=10$). According to Theorem \ref{thm.denoising}, the best activation functions for denoising will be the ones where $g_\theta'(t)$ is bounded away from zero, in particular we observe that the performance for all nonlinearities seem to deteriorate by increasing $n$. We also observe that ELU~\cite{clevert2015fast}
theoretically has better denoising properties than  Softplus~\cite{nair2010rectified}, LeakyReLU~\cite{maas2013rectifier}, Swish~\cite{ramachandran2017swish} or ReLU. 

Table \ref{table} shows lower bounds for $\inf_{t\in[-1,1]} |g'_{\theta}(t)|$ for popular activation functions. The strategy to produce such lower bounds mainly uses the bound \eqref{c2.bound} and it is explained in Appendix \ref{app.bounds}. Note that we do not produce a provable bound for ELU or ReLU because they are not smooth enough.

\begin{table}
\centering
\begin{tabular}{ccccc}
\hline
\hline
Activation function & n & $T:=\inf_{t\in[-1,1]} |g'_{\theta}(t)|$ & $c_{\theta,n}$ & $T/c_{\theta,n}$ \\
\hline
\hline
$\operatorname{id}(x)=x$ & 2 & 4.189 & 4.189 &1.000
\\
& 10 &  1.884 & 1.884  & 1.000  
\\
\hline
$\operatorname{softplus}(x)=\log(1+e^x)$ & 2 & 0.998 & 7.83 & 0.127 
\\
 & 10 & 0.462 & 10.759 & 0.043
 \\
 \hline
$\displaystyle\operatorname{tanh}(x)=\frac{e^x - e^{-x}}{e^x + e^{-x}}$ & 2 & 2.959 & 2.996 & 0.988
\\
 & 10 & 1.635 & 1.639 & 0.997
 \\
 \hline
 $\operatorname{sigmoid}(x)=\displaystyle\frac{1}{1 + e^{-x}}$ & 2 & 0.238 & 3.380 & 0.071
\\
 & 10 & 0.113 & 5.295 & 0.021
 \\
 \hline 
 $\operatorname{swish}(x)=\displaystyle\frac{x}{1 + e^{-x}}$ & 2 & 0.864 & 1.187 & 0.728
\\
 & 10 & 0.437 & 0.497 & 0.880
 \\
 \hline 
 \hline 
\end{tabular}
\caption{\label{table} Lower bounds for $\inf_{t\in[-1,1]} |g'_{\theta}(t)|$. We use a degree 10 polynomial approximation. The approximation errors in the entries of this table are smaller than $10^{-3}$.}
\end{table}

\begin{figure}[h]
\centering
\includegraphics[width=\textwidth]{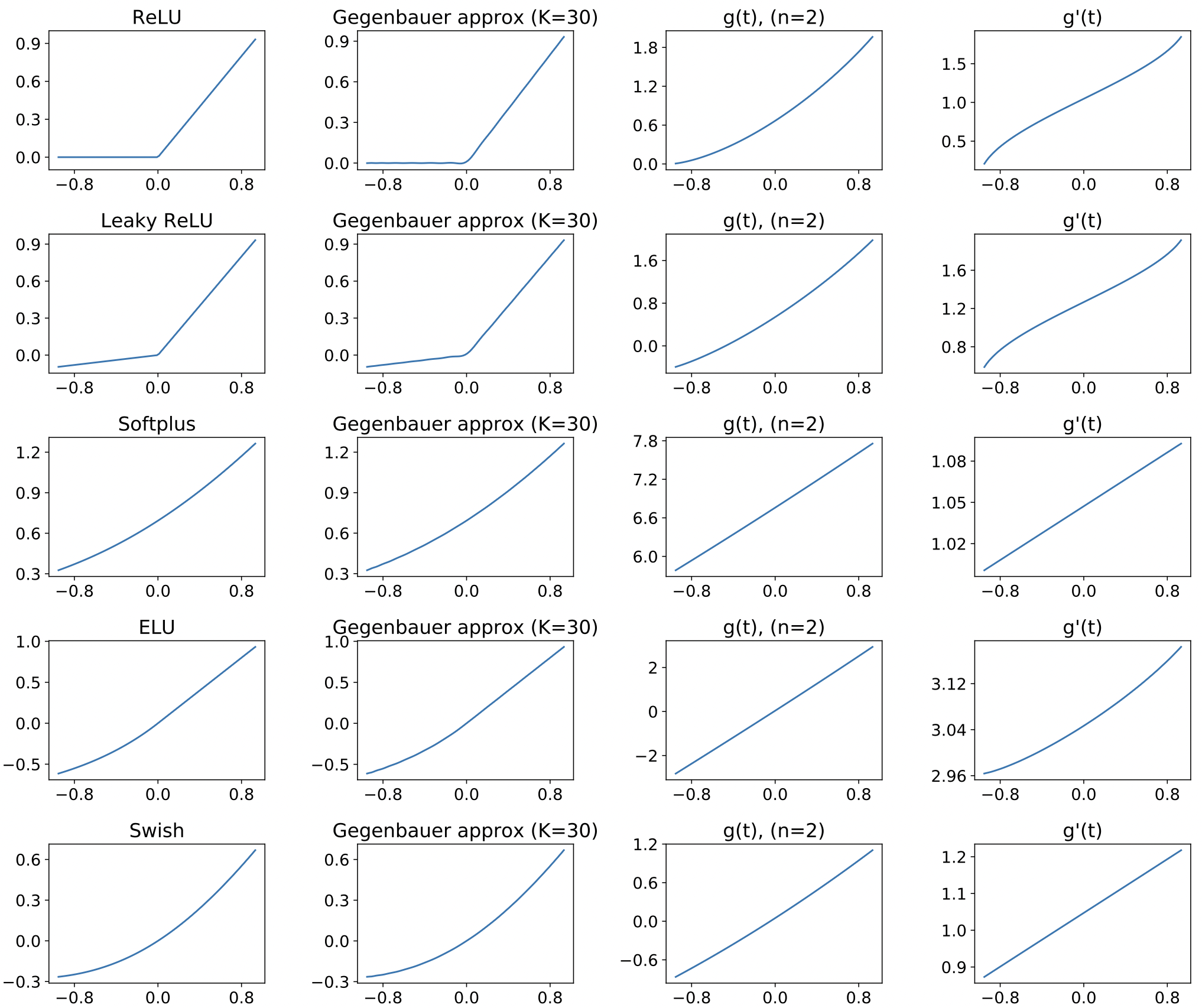}
\caption{\label{fig.activation} Activation functions and their Gegenbauer approximations for $K=30$ and $n=2$.}
\end{figure}

\begin{figure}[h]
\centering
\includegraphics[width=\textwidth]{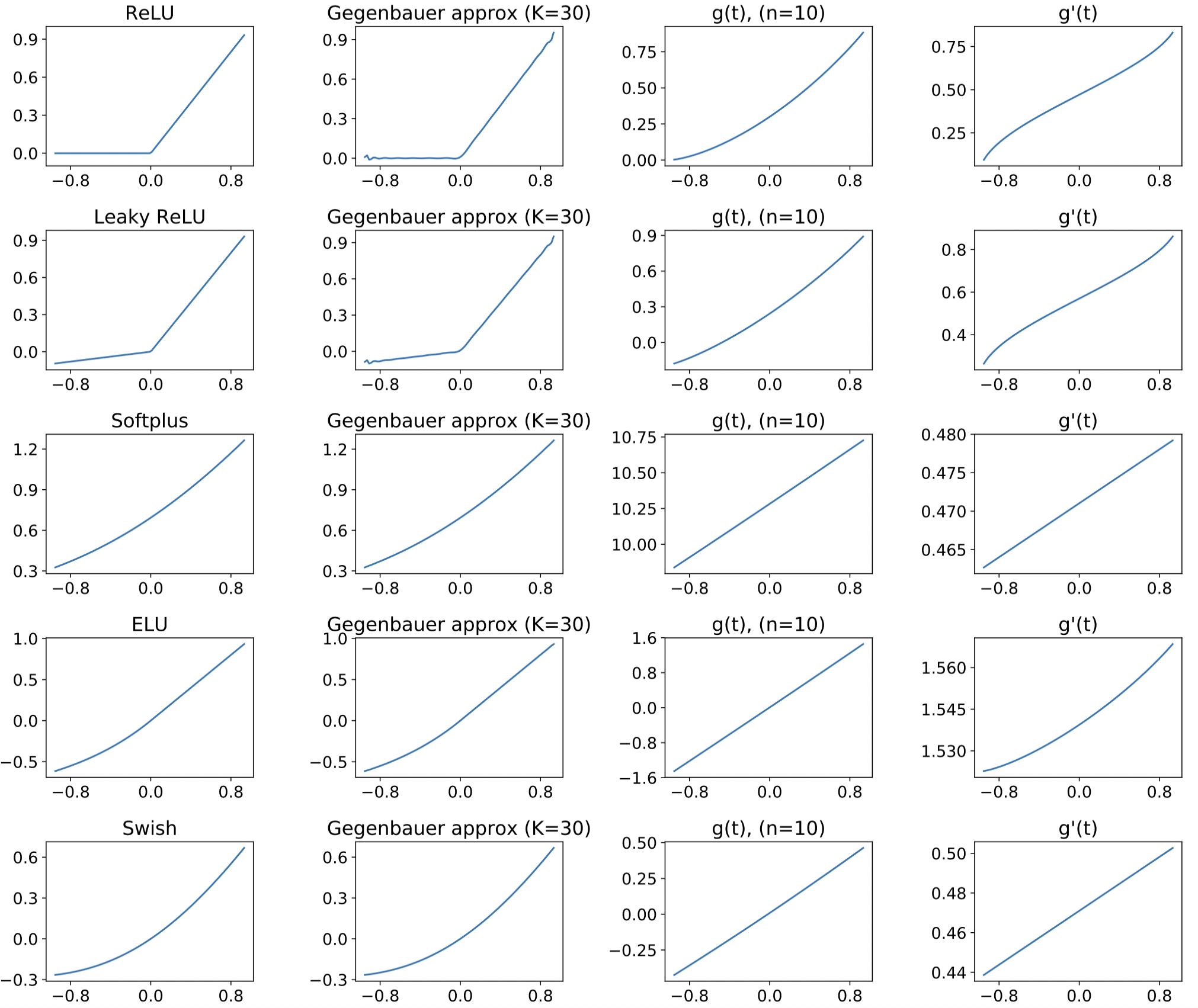}
\caption{\label{fig.activation2} Activation functions and their Gegenbauer approximations for $K=30$ and $n=10$.}
\end{figure}

\subsection{Denoising in a synthetic framework} \label{sec:synthetic}
In Figure~\ref{fig:synthetic} we perform a numerical experiment to illustrate the theory developed in Section \ref{sec:theory}. We consider a random instance of one layer of the SUNLayer model. Here $G(x)=\theta(Bx)$ where $x\in S^{n}$ for $n=9$ and $B\in \mathbb R^{100\times 10}$ is a fixed random Gaussian matrix with normalized rows. 
We perform 10 independent experiments where we draw random $x^\sharp \in S^9$ and we let $y=G(x^\sharp) + \eta$ where $\eta$ is Gaussian in $\mathbb R^{100}$. For each $y$ we use stochastic gradient descent to find $\hat x$, a local minimizer of $ \| G(x) - y \|$. 

We report $\| G(\hat x) - G(x^\sharp)\|$ and $\langle \hat x, x^ \sharp \rangle$ for different noise levels and activation functions. The solid curve corresponds to the mean over the 10 experiments, whereas the shaded area shows the standard deviation. The activation functions we consider are ReLU, softplus and ELU. The denoising performance of ELU is empirically superior to ReLU and softplus. This observation is consistent with the theory from Section~\ref{sec:theory} and the properties shown in Figure~\ref{fig.activation2}. Note that softplus satisfies that $g_\theta'(t)>0$ for all $t\in[-1,1]$ but the values of $g_\theta'(t)$ are close to zero.

\subsection{Denoising for tomography-type data }

This section presents numerical experiments on generative networks that resemble the SUNLayer structure for image denoising, evaluated on Shepp–Logan image classes. In particular, we investigate the stability of the denoising performance with respect to the choice of activation function, focusing on ReLU and ELU. The results indicate that ELU networks yield more stable reconstructions than ReLU (i.e. the ELU denoising networks are better conditioned than the ReLU ones). These findings are consistent with the properties established in Theorem~\ref{thm.denoising} and demonstrate that the proposed criteria is potentially useful for analyzing local denoising methods.

We begin by outlining the experimental setup, which includes the network framework, evaluation methodology, and testing data. We then present the numerical results and highlight our discoveries.

\paragraph{Generative network architecture.}

We consider a denoising autoencoder with encoder $E_\theta$ and decoder $D_\theta$:
\begin{equation}
    z = E_\theta(y), \quad \hat{x} = D_\theta(z),
\end{equation}
where $y \in \mathbb{R}^{n \times n}$ is a noisy image, $z \in \mathbb{R}^d$ is the latent code, and $\hat{x}$ is the reconstruction.
The encoder consists of three convolutional layers (linear) with nonlinear activations, followed by a fully-connected layer projecting to the latent space. The decoder mirrors this structure using transposed convolutions to reconstruct the image.

We enforce a soft normalization on the latent code:
\[
\tilde{z} = \frac{z}{1 + \|z\|_2},
\]
which encourages latent vectors to lie near the unit sphere, to resemble the SUNLayer model. The overall training objective is to minimize
\[
\mathbb{E} \left[ \|\hat{x} - x\|_2^2 \right].
\]

\paragraph{Activation functions.}

We investigate the effect of different nonlinear activation functions $\theta(\cdot)$ on local stability of denoising recovery:

\begin{itemize}
    \item \textbf{ReLU:} $\theta(x) = \max(0, x)$.
    \item \textbf{ELU:} $\theta(x) = x$ if $x > 0$, else $e^x - 1$.
\end{itemize}
The choice of activation directly affects the local properties of the trained denoising network, and thus can influence the stability of the reconstruction. However, in this experiment, we do not compare the overall denoising performance or accuracy of the two activation functions. Instead, we ensure that their denoising results are at comparable levels and focus on analyzing the detailed local properties of the optimized outputs.

\paragraph{Evaluation.}
At a high level, Theorem~\ref{thm.denoising} also shows that \textbf{local recovery stability} of a generative network $G = \theta \circ f$ depends on two key factors from the activation functions that we will call \emph{sensitivity} $T$ and \emph{amplification} $M_k$.  
    \[
    T = \inf_{t \in [-1,1]} |g_\theta'(t)|,\quad 
    M_k = \max_{t \in [-1,1]} |\varphi'_{n,k}(t)|,~\forall k \leq K.
    \]
Together, these quantities govern the local stability of the generative network $G$.

To connect the experiments to the framework of Theorem~\ref{thm.denoising}, we focus on the decoder $D_\theta$, which maps latent codes to reconstructed images. Importantly, the decoder 
relates to the generative network $G$ in our theory via $D_\theta\approx G^{-1}$. See Figure~\ref{fig: flowchart} for architecture illustration. In particular, we examine:

\begin{itemize}
    \item The minimal and maximal singular values of the decoder Jacobian $J_D(z) = \partial D_\theta(z)/\partial z$, i.e. $\sigma_{\min}(J_D(z))$ and $\sigma_{\max}(J_D(z))$, which quantify local conditioning of the map. A larger minimal singular value indicates greater local stability, while a larger maximal singular value reflects worse sensitivity to perturbations.
    \item The local Lipschitz constant of $D_\theta$ in regions of the latent manifold, providing a measure of how perturbations in $z$ propagate to the output. Smaller Lipschitz constants correspond to a more stable local geometry.
  
\end{itemize}
   We note that we are comparing the stability of the trained denoisers using the methodology from \cite{vincent2008extracting} for different activation functions. The main Theorem~\ref{thm.denoising} does not directly apply to this numerical setting, but the the experiments show that the behavior of the generative model is consistent to what the theory predicts in the idealized setting. 
\begin{center}
\begin{tikzpicture}[node distance=0.3cm, auto, font=\scriptsize]

\node [draw, rectangle, minimum width=2.5cm, minimum height=1cm] (y) {Noisy image $y$};
\node [draw, rectangle, right=of y, minimum width=2.5cm, minimum height=1cm] (encoder) {Encoder $E_\theta$};
\node [draw, rectangle, right=of encoder, minimum width=2.5cm, minimum height=1cm] (latent) {Latent code $z$};
\node [draw, rectangle, right=of latent, minimum width=2.5cm, minimum height=1cm] (decoder) {Decoder $D_\theta$};
\node [draw, rectangle, below=of decoder, minimum width=3cm, minimum height=1cm] (tests) {Stability evaluation:  $\sigma_\text{min}(J), \sigma_\text{max}(J), \text{Lip.~const.}$};
\node [draw, rectangle, above=of decoder, minimum width=1cm, minimum height=1cm] (G) {$G$};
\node [draw, rectangle, right=of decoder, minimum width=3cm, minimum height=1cm] (reconstruct) {Reconstruction $\hat{x}$};

\draw[->, thick] (y) -- (encoder) node[midway, above] {};
\draw[->, thick] (encoder) -- (latent) node[midway, above] {};
\draw[->, thick] (latent) -- (decoder) node[midway, above] {};
\draw[->, thick] (decoder) -- (reconstruct) node[midway, above] {};
\draw[->, thick, dashed] (latent) -- (tests) node[midway, left] {};
\draw[->, thick] (decoder.south) -- (tests.north) node[midway, right] {};
\draw[->, thick, bend right=30] (reconstruct) to (G) node[midway, left] {};
\draw[->, thick, bend right=30] (G) to (latent) node[midway, left] {};
\end{tikzpicture}
\captionof{figure}{The dashed arrow indicates that stability tests are performed on the decoder outputs as a function of latent perturbations.}
\label{fig: flowchart}
\end{center}

\paragraph{Testing data.}

All experiments are conducted on the Shepp-Logan phantom class \cite{shepp1974}, a standard benchmark for head section imaging. The original phantom consists of a superposition of $10$ ellipses, each parameterized by rotation angle, semi-major, semi-minor axes, position, and intensity.

To generate a dataset of similar images, we introduce random tweaks to each ellipse independently. Specifically, for each ellipse we apply random rotation, with angle sampled uniformly from $[-30^\circ, 30^\circ]$; and independent scaling of the semi-major and semi-minor axes, sampled from $\mathrm{Uniform}[0.8, 1.2]$.

We consider two image classes:
\begin{itemize}
    \item \textbf{Class 1}: random rotation only;
    \item \textbf{Class 2}: random rotation combined with random axis scaling.
\end{itemize}
Clearly, $\text{Class 1} \subseteq \text{Class 2}$, hence denoising Class 2 images is a more challenging task than on Class 1. 
All images are generated on a $128 \times 128$ pixel on the grid $[-1, 1]^2$. After random phantom generation, each image is normalized to have zero mean and unit variance.

Additive Gaussian noise is then applied pixel-wise:
\[
y = x + \eta, \qquad \eta \sim \mathcal{N}(0, \epsilon^2 I),
\]
where we consider two noise levels:
\[
\begin{array}{l}
\text{medium:} ~ \epsilon = 1, ~~
\text{high:}  ~ \epsilon = 5.
\end{array}
\]
A demonstration of the generated Shepp-Logan image classes is in Figure~\ref{fig: shepp-logan}.

\begin{figure}[!htb] 
\hspace{-30mm} \includegraphics[width=1.4\linewidth]{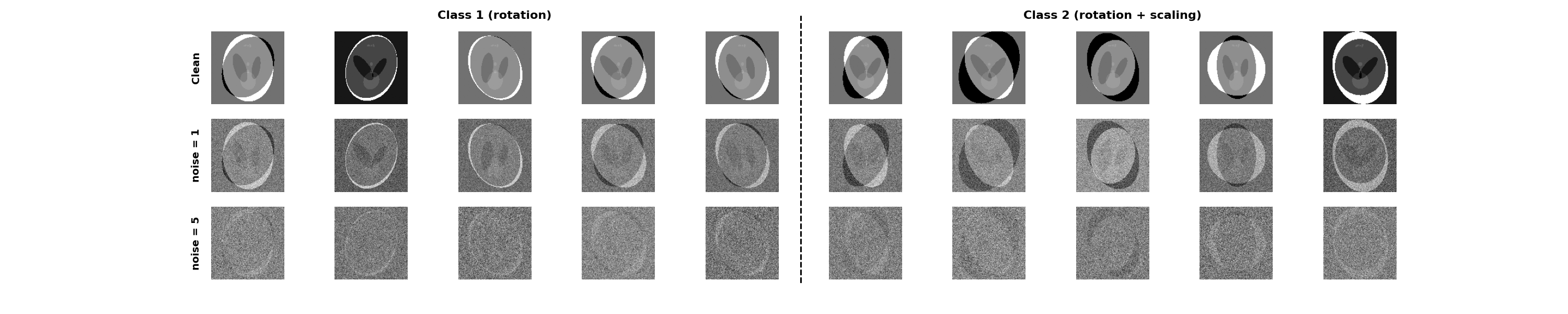} 
\caption{Shepp–Logan image classes (top row) and their noisy versions at different noise levels (middle and bottom). } \label{fig: shepp-logan} \end{figure}

\paragraph{Numerical results.} 

On each class of images and associated noise levels, we use 1000 training samples to build a sufficiently good denoising encoder-decoder. We test on 100 samples from the same class of noisy images. Below we present the denoising visualization in Figures~\ref{fig: easy-1},~  \ref{fig: easy-5},~ \ref{fig: hard-5}, where we randomly sample 2 testing images for demonstration. The stability evaluation on the two activation functions: ReLU and ELU, are summarized below in Tables \ref{table: easy-1},~\ref{table: easy-5},~\ref{table: hard-5}, where we highlight the best result per testing case in \textbf{bold} font. 

In summary, ELU outperforms ReLU in almost every metrics in all the testing cases regarding the reconstruction stability, achieving smaller magnitude control in the local landscape by observing its Lipschitz constant and $\sigma_{\max}(J)$, and (sometimes) more robust $\sigma_{\min}(J)$ as well, see Table~\ref{table: easy-5}. Autoencoders employing both activation functions exhibit meaningful denoising performance, without visually distinguishable results.

\begin{center}
\begin{tabular}{lccc}
\toprule\textbf{Activation} & \textbf{$\sigma_{\min}(J) \uparrow$} & \textbf{$\sigma_{\max}(J) \downarrow$} & $\text{Lip. const.} \downarrow$ \\
\midrule\textbf{ReLU} & \bf 1.60 & 15.314 & 141.819 \\
\textbf{ELU} & 1.370 & \bf 10.029 & \bf 93.541 \\
\bottomrule
\end{tabular}
\captionof{table}{Class 1 with noise level $\epsilon = 1:$ stability evaluation.}
\label{table: easy-1} 
\end{center}

\begin{figure}[!htb]
    \centering
\includegraphics[width=0.9\linewidth]{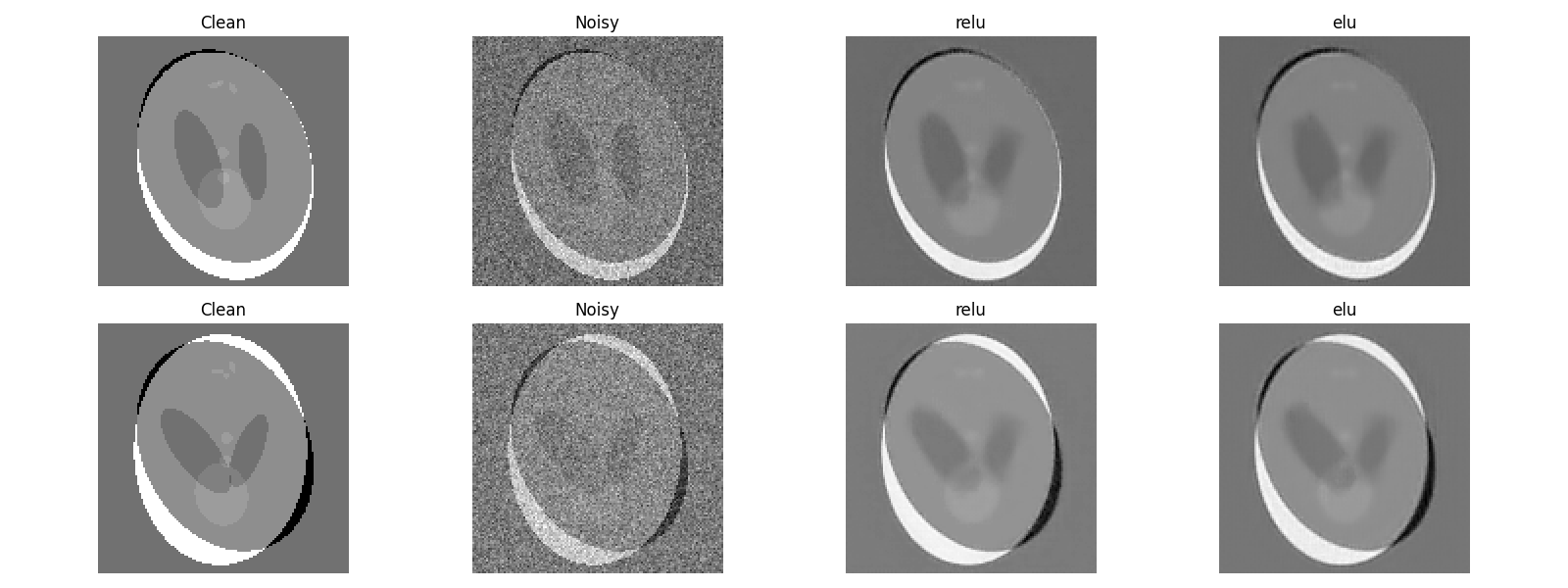}
    \caption{Class 1 with noise level $\epsilon = 1.$}
    \label{fig: easy-1}
\end{figure}

\begin{center}
\begin{tabular}{lccc} 
\toprule
\textbf{Activation} & $\sigma_{\min}(J) \uparrow$ & \textbf{$\sigma_{\max}(J) \downarrow$} & $\text{Lip. const.} \downarrow$ \\ \midrule\textbf{ReLU} & 2.052 & 13.036 & 119.675 \\ \textbf{ELU} & \bf 2.149 & \bf 11.976 & \bf 113.976 \\ \bottomrule
\end{tabular}
\captionof{table}{Class 1 with noise level $\epsilon = 5:$ stability evaluation.}
\label{table: easy-5} 
\end{center}

\begin{figure}[!htb]
    \centering
\includegraphics[width=0.9\linewidth]{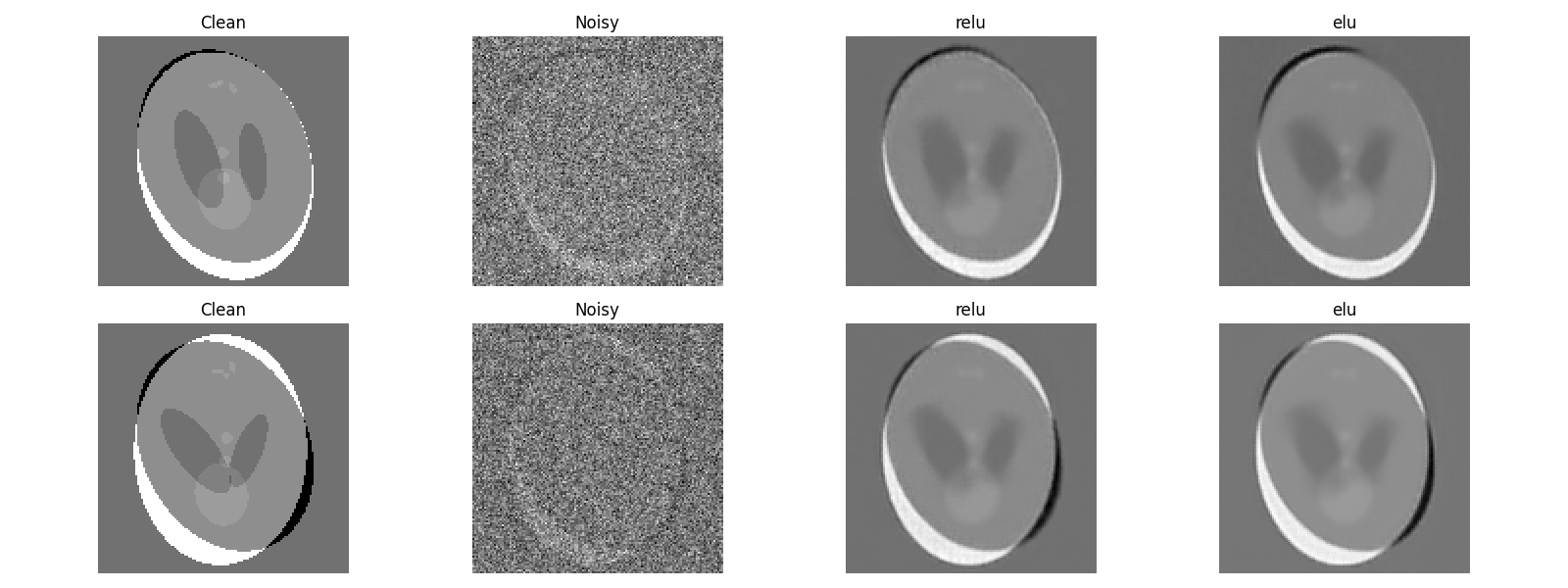}
    \caption{Class 1 with noise level $\epsilon = 5:$ denoising performance.}
    \label{fig: easy-5}
\end{figure}

\begin{center}
\begin{tabular}{lccc}
\toprule
\textbf{Activation} & \textbf{$\sigma_{\min}(J) \uparrow$} & \textbf{$\sigma_{\max}(J) \downarrow$} & $\text{Lip. const.} \downarrow$ \\
\midrule
\textbf{ReLU} & \bf 1.684 & 9.165 & 86.105 \\
\textbf{ELU} & 1.496 & \bf 8.767 & \bf 80.455 \\
\bottomrule
\end{tabular}
\captionof{table}{Class 2 with noise level $\epsilon = 5:$ stability evaluation.}
\label{table: hard-5} 
\end{center}

\begin{figure}[!htb]
    \centering
\includegraphics[width=0.9\linewidth]{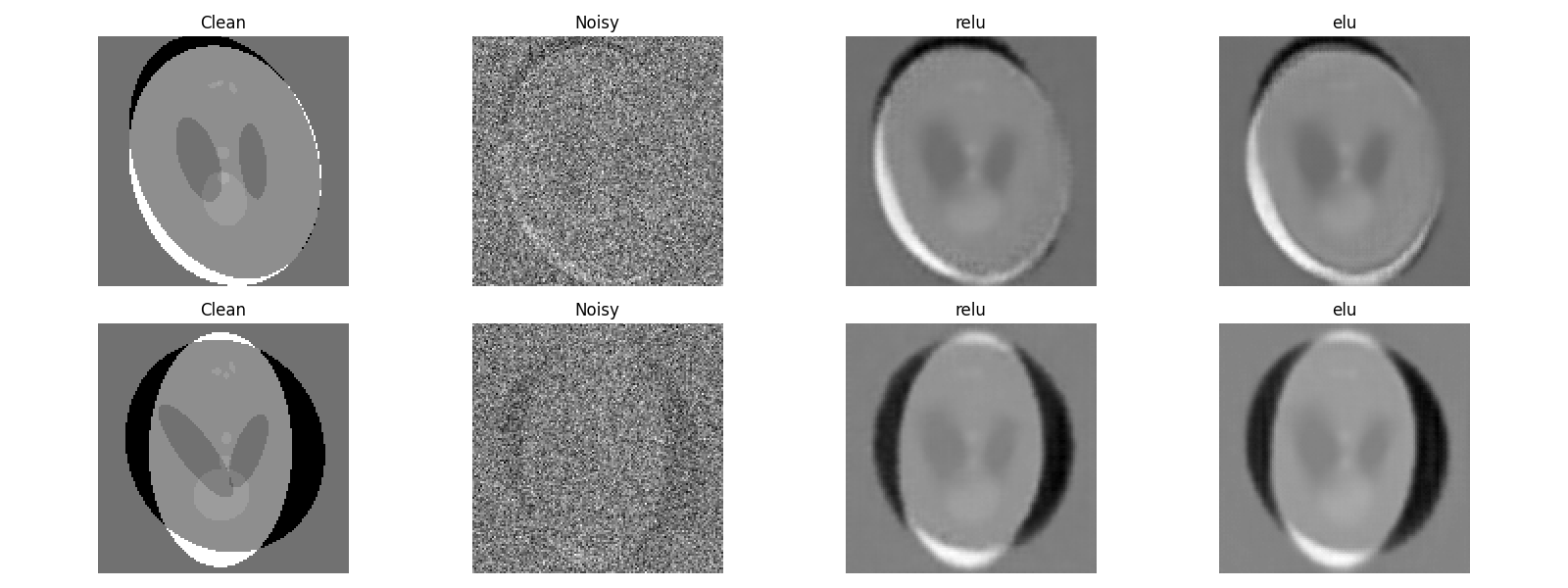}
    \caption{Class 2 with noise level $\epsilon = 5.$}
    \label{fig: hard-5}
\end{figure}

\section{Discussion and open problems} \label{sec:discussion}

The theoretical framework we propose potentially applies to other inverse problems for which deep generative priors may be obtained, like phase retrieval~(\cite{candes2015phase}) or multi-reference alignment~(\cite{bandeira2014multireference}). It is reasonable to suspect that the sample complexity of this problems may significantly decrease by the use of generative priors. Such a result would be particularly significant for the multi-reference alignment problem (on some regimes estimation of the signal is not possible unless the number of samples exceeds $1/\text{SNR}^3$)~\cite{perry2017sample}.   

A different direction to explore is whether it is possible to use this framework to study classification problems. A classification problem (with $n$ classes) can be thought as a function $c: D \to [n]$ where $D\subset S^{n}$. An interesting question is what classification functions $c$ can be approximated by using functions $L^{(\ell)}(x):=L_{n_\ell}\circ \ldots \circ L_{n_2}\circ L_{n}(x)\in \mathscr{L}^2(S^{n_{\ell}})$ where $\ell$ is the number of layers. In this framework the classifier would be approximated by a function $\bar c(x)=\arg\max_{i\in[n]} \{\langle L^{(\ell)}(x), y_i \rangle\}_{i \in [n]}$ for some $y_1, \ldots, y_n \in \mathscr{L}^2(S^{n_{\ell}})$ and $y_i$ are the objects we may want to find using local methods. We believe an answer to a problem of this form may involve the study of the geometric or topological properties of $L^{(\ell)}(S^{n})\subset \mathscr{L}^2(S^{n_{\ell}})$.

Finally, an intriguing question that arises from this analysis is what the condition $g'_\theta(t)>0$ means for the activation function $\theta$. For instance, squaring the coefficients of the Fourier decomposition of a function corresponds with convolving the function with itself in the time domain. Is there an interesting interpretation of squaring the coefficients of the Gegenbauer decomposition?

\section*{Acknowledgements} The authors would like to thank Joan Bruna, Tim Carson, Ben Recht and Ludwig Schmidt. This work was done in part while SV was visiting the Simons Institute for the Theory of Computing. DGM was partially supported by AFOSR F4FGA06060J007 and AFOSR Young Investigator Research Program award F4FGA06088J001. 

\begin{appendices}
\section{Proof of Lemma~\ref{lemma.abs.convergence}} \label{app.convergence}
The proof leverages the standard sufficient condition for convergence of Proposition~\ref{prop.convergence} and the identities~\eqref{bound.inf} and \eqref{bound.derivative}.

\begin{proposition} \label{prop.convergence}
If for all $t\in I \subset \mathbb R$ we have
\begin{enumerate}
\item[(a)] $\lim_{n\to \infty} S_n(t) = f(t)$ 
\item[(b)] $\lim_{n\to \infty} S_n'(t) = g(t)$
\item[(c)] $|S_n''(t)|<C$ for some $C$ independent of $n$  
\end{enumerate}
then $g(t)=f'(t)$ for all $t\in I \subset \mathbb R$.
\end{proposition}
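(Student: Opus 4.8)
The plan is to prove the identity $g'(t)=f(t)$ by interchanging the limit with differentiation, using hypothesis (c) to supply the compactness that legitimizes the interchange. The mechanism is the classical term-by-term differentiation theorem, applied not to $\{S_n\}$ but to the sequence of first derivatives $\{S_n'\}$: here $S_n'\to g$ from (b) plays the role of the converging functions, while the second derivatives $S_n''$ play the role of their derivatives, controlled by (c). The target $g'(t)=f(t)$ is then read off by identifying the derivative of the limit $g$ with the limiting value supplied by (a).

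First I would exploit (c). The uniform bound $|S_n''(t)|<C$ shows that each $S_n'$ is $C$-Lipschitz, so the family $\{S_n'\}$ is equicontinuous (indeed uniformly Lipschitz) on $I$. Combined with the pointwise convergence $S_n'\to g$ from (b), the Arzel\`a--Ascoli theorem on any compact subinterval of $I$ upgrades the convergence $S_n'\to g$ to locally uniform, and in particular forces the limit $g$ to be continuous and locally Lipschitz. This is the step that converts the mere boundedness asserted in (c) into usable control on the oscillation of the derivatives, which is what a naive pointwise argument lacks.

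Next I would pass to the derivative and identify the limit. Writing $S_n'(t)=S_n'(t_0)+\int_{t_0}^t S_n''(s)\,ds$ by the fundamental theorem of calculus and letting $n\to\infty$, the uniform control on $\{S_n''\}$ lets me move the limit inside the integral, so that $g$ is an antiderivative of $\lim_{n\to\infty}S_n''$; differentiating yields $g'(t)=\lim_{n\to\infty}S_n''(t)$ for all $t\in I$. The decisive remaining step is to match this limit with $f$, that is, to show $\lim_{n\to\infty}S_n''(t)=f(t)$ by bringing in hypothesis (a), after which $g'(t)=f(t)$ follows. The hard part will be exactly this identification: condition (c) only bounds the second derivatives without naming their limit, so the crux is to combine the pointwise convergence $S_n\to f$ from (a) with the equicontinuity extracted from (c) to pin down $\lim_n S_n''=f$ and to confirm that the convergence is locally uniform enough to justify each exchange of limit and derivative used above.
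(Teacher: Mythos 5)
The final step of your plan is a dead end, and the root cause is that you took at face value what is actually a typo in the statement. As printed, the conclusion $g'(t)=f(t)$ is false: take $S_n(t)=t$ for all $n$; then (a), (b), (c) hold with $f(t)=t$ and $g\equiv 1$, yet $g'\equiv 0\neq f$. What the paper actually proves---and what its application in Lemma~\ref{lemma.abs.convergence} requires, where $S_K=\sum_{k=0}^K a_k^2\varphi_{n,k}$ plays the role of $S_n$, the limit of the partial sums plays the role of $f$, the limit of their derivatives plays the role of $g$, and the desired conclusion is $g_\theta'=h_\theta$---is $f'(t)=g(t)$: the limit of the derivatives is the derivative of the limit. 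Your proposal instead tries to compute $g'$ as $\lim_{n\to\infty}S_n''$ and then to show $\lim_{n\to\infty}S_n''=f$. This cannot be repaired: hypothesis (c) only bounds the second derivatives and does not make them converge (for $S_n(t)=n^{-2}\sin(nt)$ all three hypotheses hold with $f=g=0$, but $S_n''(t)=-\sin(nt)$ has no pointwise limit), so ``moving the limit inside the integral'' in $S_n'(t)=S_n'(t_0)+\int_{t_0}^t S_n''(s)\,ds$ is not legitimate; and even when $\lim_n S_n''$ does exist it bears no relation to $f$ (again $S_n(t)=t$: $S_n''\equiv 0$ while $f(t)=t$). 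The identification you yourself flag as ``the hard part'' is in fact impossible.

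By contrast, the first half of your argument is sound and, aimed at the correct conclusion $f'=g$, would give a complete proof by a route genuinely different from the paper's. From (c) each $S_n'$ is $C$-Lipschitz, so the family is equicontinuous; with (b) this upgrades $S_n'\to g$ to locally uniform convergence and makes $g$ continuous. Now apply the fundamental theorem of calculus to $S_n$ itself rather than to $S_n'$: from $S_n(x+h)-S_n(x)=\int_x^{x+h}S_n'(s)\,ds$, letting $n\to\infty$ (justified by the locally uniform convergence of the integrands) gives $f(x+h)-f(x)=\int_x^{x+h}g(s)\,ds$, whence $f'=g$ by continuity of $g$. The paper's own proof is more elementary and quantitative and uses no compactness: Taylor's theorem with (c) gives $|S_n(x+h)-S_n(x)-hS_n'(x)|\le Ch^2/2$, so combining with (a) and (b) in the displayed triangle inequality and letting $n\to\infty$ yields $|f(x+h)-f(x)-hg(x)|\le Ch^2/2$ for every $h$; dividing by $h$ and letting $h\to 0$ gives $f'(x)=g(x)$ directly.
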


\begin{proof}
Fix $t\in I$ and $h\neq 0$ such that $t+h\in I$. Then
\begin{align*}
|f(t+h)-f(t)-hg(t)|
&\le |f(t+h)-S_n(t+h)| + |f(t)-S_n(t)| + h|g(t)-S_n'(t)|\\
&\quad  + |S_n(t+h)-S_n(t)-hS_n'(t)|.
\end{align*}

By Taylor’s theorem, there exists $\xi_{n,h}$ between $t$ and $t+h$ such that
\[
S_n(t+h)-S_n(t)-hS_n'(t)
= \frac{h^2}{2} S_n''(\xi_{n,h}),
\]
and by assumption (c),
\[
|S_n(t+h)-S_n(t)-hS_n'(t)| \le \frac{C}{2}h^2.
\]

Dividing by $|h|$ gives
\[
\left|\frac{f(t+h)-f(t)}{h}-g(t)\right|
\le \left\vert\frac{f(t+h)-S_n(t+h)}{h}\right\vert
+ \left\vert\frac{f(t)-S_n(t)}{h}\right\vert
+ |g(t)-S_n'(t)|
+ \frac{C}{2}|h|.
\]

Let $n\to\infty$. By (a) and (b), the first three terms tend to zero, hence
\[
\limsup_{n\to\infty}
\left|\frac{f(t+h)-f(t)}{h}-g(t)\right|
\le \frac{C}{2}|h|.
\]

Now letting $h\to 0$ yields
\[
\lim_{h\to 0} \frac{f(t+h)-f(t)}{h} = g(t).
\]
Therefore $f$ is differentiable at all $t \in I$ and $f'(t)=g(t)$.
\end{proof}

\begin{proof}[Proof of Lemma~\ref{lemma.abs.convergence}]

Let $\omega\in S^n$ and define the function $h:S^n\to \mathbb R$ as $h(\tau)=\theta(\omega\cdot \tau)$. Then $h\in \mathscr{L}^2(S^{n})$ and $h$ is $C^2$. We have $h(\tau)=\sum_{k=0}^\infty a_k \varphi_{n,k}(\omega\cdot \tau)$, in particular $h_k=a_kF_k(\omega,\cdot)\in \mathcal H_k(S^n)$. Then \eqref{c2.bound} implies $k^2\|a_k\varphi_{n,k}(t)\|_{\mathscr{L}^2(\mu_n)} < A_n$ for some constant $A_n$ that depends on $n$ but it does not depend on $k$. Using \eqref{bound.norm} we obtain 
\begin{equation} \label{bound.ak}
a_k^2< \frac{B_n}{k^{n+3}},
\end{equation}
which implies
$$\sum_{k=K}^\infty a_k^2 \varphi_{n,k}(t) < \sum_{k=K}^\infty \frac{1}{k^{n+3}} \varphi_{n,k}(t) \leq B_n \sum_{k=K}^\infty \frac{ k^{n-1}}{k^{n+3}} < \infty  $$
which establishes the pointwise convergence of $\sum_{k=0}^\infty a_k^2 \varphi_{n,k}(t)$ to a function $g_\theta(t)$ for $n\geq 1$. Now we consider the derivatives and we use the identity \eqref{bound.derivative} and we get
$$\sum_{k=0}^\infty a_k^2 \varphi_{n,k}'(t) = C_n \sum_{k=0}^\infty a_k^2 \varphi_{n+2,k-1}(t) \leq D_n \sum_{k=0}^\infty a_k^2 k^{n+1} $$
where $C_n, D_n$ are constants depending only on $n$. Note that this argument guarantees the pointwise convergence of $\sum_{k=0}^\infty a_k^2 \varphi_{n,k}'(t) < \infty$. 

Using \eqref{bound.derivative} again we get
$$\sum_{k=0}^\infty a_k^2 \varphi_{n,k}''(t) = E_n \sum_{k=0}^\infty a_k^2 \varphi_{n+2,k-1}'(t) = F_n \sum_{k=0}^\infty a_k^2 \varphi_{n+4,k-2}(t) \leq G_n \sum_{k=0}^\infty a_k^2 k^{n+3}. $$
Now the bound \eqref{bound.ak} is not good enough to bound the second derivative, but if we have that $\theta$ is $C^4$ we can use \eqref{c2.bound} with $r=2$ obtaining a bound that allows us to use Proposition~\ref{prop.convergence} and complete the proof of Lemma~\ref{lemma.abs.convergence}.
\end{proof}

\section{Derivation of bounds in Table \ref{table}} \label{app.bounds}
Let $\theta:[-1,1]\to \mathbb R$ be a $C^4$ activation function. Let $h:S^n\to \mathbb R$ be a function defined as $h(\tau)=\theta(\tau\cdot\omega)$ for some $\omega\in S^n$ fixed. 
Then bound \ref{c2.bound} says that $k^4\|h_k\|\leq \|\Delta_S^2 h \|$.

Similar computations than the ones in the proof of Lemma \ref{lemma.theta} show that 
$\|h_k\|^2=a_k^2\frac{\alpha_{n,k}}{\operatorname{vol(S^n)}}$ and the norm of the laplacian can be computed as 
$$\|\Delta_S^2 h \|^2= \operatorname{vol}(S^{n-1})\int_{-1}^1 (\Delta^2_{S^n}(\theta(t)))^2(1-t^2)^{\frac{n-2}{2}}dt=: \operatorname{vol}(S^{n-1})\Delta_{\theta,n},$$
where $\Delta_{\theta,n}$ is a constant depending only on $\theta$ and $n$ that we can compute for each activation function.
We have $$a_n^2 \leq \frac{\Delta_{\theta,n}\operatorname{vol}(S^{n-1})}{\operatorname{vol}(S^{n}) \alpha_{n,k} k^8}.$$
Then using \eqref{bound.derivative} and \eqref{bound.inf} we have 
$$|a_k^2\varphi_{n,k}'(t)|\leq \frac{\Delta_{\theta,n}\operatorname{vol}(S^{n-1})}{\operatorname{vol}(S^{n}) \alpha_{n,k} k^8} \frac{(n+1)\operatorname{vol}(S^n)}{\operatorname{vol}(S^{n+2})}|\varphi_{n+2,k-1}(t)| = \frac{\Delta_{\theta,n}\operatorname{vol}(S^{n-1})}{ \alpha_{n,k} k^8} \frac{(n+1)}{\operatorname{vol}(S^{n+2})} \frac{\alpha_{n+2,k-1}}{\operatorname{vol}(S^{n+2})}.$$
Note that $\frac{\alpha_{n+2,k-1}}{\alpha_{n,k}}= \frac{(k+n-1)k}{n+2}$, obtaining $$|a_k^2\varphi_{n,k}'(t)|<A_{\theta,n} \frac{1}{k^6}.$$
One can uniformly bound the tail $\displaystyle\sum_{k=K+1}^\infty |a_k^2\varphi_{n,k}'(t)|$ by observing that 
$\displaystyle\sum_{k=K+1}^\infty \frac{1}{k^6}\leq \int_{K}^\infty \frac{1}{t^6} dt = \frac{K^{-5}}5$.

\end{appendices}

\bibliography{references}
\end{document}